\newtheorem{proposition}{Proposition}
\newtheorem{lemma}{Lemma}
\newtheorem{definition}{Definition}
\def\rvb{{\mathbf{b}}}
\def\rvx{{\mathbf{x}}}
\def\rvy{{\mathbf{y}}}
\def\rvw{{\mathbf{w}}}
\def\gN{{\mathcal{N}}}
\def\gU{{\mathcal{U}}}
\def\mI{{\bm{I}}}
\def\vzero{{\bm{0}}}
\def\1{\mathds{1}}
\def\gB{{\mathcal{B}}}
\def\gD{{\mathcal{D}}}
\newcommand{\R}{\mathbb{R}}
\newcommand{\norm}[1]{\left\lVert#1\right\rVert}
\newcommand{\meanp}[2]{\mathbb{E}_{#1} \left\lbrack #2 \right\rbrack}
\newcommand{\kl}[2]{\mathrm{KL}\left(#1 || #2\right)}
\DeclareMathOperator*{\argmin}{arg\,min}
\DeclareMathOperator{\Tr}{Tr}
\newcommand{\Var}{\mathrm{Var}}
\def\transpose{{\top}}
\newcommand{\E}{\mathbb{E}}
\def\method{MBM}
\def\score{\texttt{s}}
\def\bridge{\rvb}
\def\distancefn{\ell}
\def\tmp#1#2#3{%
  \definecolor{Hy#1color}{#2}{#3}%
  \hypersetup{#1color=Hy#1color}}
\def\tmp#1#2{%
  \colorlet{Hy#1bordercolor}{Hy#1color#2}%
  \hypersetup{#1bordercolor=Hy#1bordercolor}}
\title{Constrained Generative Modeling with \\Manually Bridged Diffusion Models}
\author{
    Saeid Naderiparizi\equalcontrib \textsuperscript{\rm 1,3},
    Xiaoxuan Liang\equalcontrib \textsuperscript{\rm 1,3},
    Berend Zwartsenberg\textsuperscript{\rm 3},
    Frank Wood\textsuperscript{\rm 1,2,3}
}
\begin{document}

\maketitle

\begin{abstract}
In this paper we describe a novel framework for diffusion-based generative modeling on constrained spaces. In particular, we introduce manual bridges, a framework that expands the kinds of constraints that can be practically used to form so-called diffusion bridges. We develop a mechanism for combining multiple such constraints so that the resulting multiply-constrained model remains a manual bridge that respects all constraints. We also develop a mechanism for training a diffusion model that respects such multiple constraints while also adapting it to match a data distribution. We develop and extend theory demonstrating the mathematical validity of our mechanisms. Additionally, we demonstrate our mechanism in constrained generative modeling tasks, highlighting a particular high-value application in modeling trajectory initializations for path planning and control in autonomous vehicles.
\end{abstract}
\begin{links}
\link{Code}{github.com/plai-group/manually-bridged-models}
\end{links}

\section{Introduction}

For generative models to become practically useful in embodied artificial intelligence domains, guaranteed constraint satisfaction is effectively required.  Examples abound, such as path planning and control in autonomous vehicles and advanced driver-assistance (AV/ADAS) systems \cite{janner2022planning, zhong2022guided}, kinematic, dynamics, power, and other constraints in robotics \cite{schulman2014motion}, safety critical plant operation \cite{knight2002safety}, etc.  Strictly eliminating non-factual or offensive hallucinations from large language models (LLM) falls in this problem category too \cite{azamfirei2023large, huang2023survey}.

Our experimental focus in this paper will be on a particular subproblem in AV/ADAS planning and behavioral simulation, so starting here, we will motivate our work using language from this domain; however, note that the mechanisms and theory we develop are general.

\begin{figure*}[t]
    \centering
   \begin{subfigure}{0.24\textwidth}
       \includegraphics[width=\textwidth]{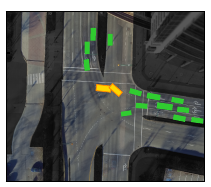}
       \caption{Standard diffusion\hfill}
       \label{fig:banner-baseline}
   \end{subfigure}\hfill
   \begin{subfigure}{0.24\textwidth}
       \includegraphics[width=\textwidth]{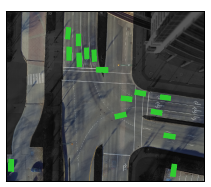}
       \caption{Conditional diffusion\hfill}
       \label{fig:banner-conditional}
   \end{subfigure}\hfill
    \begin{subfigure}{0.24\textwidth}
        \captionsetup{justification=centering}
        \includegraphics[width=\textwidth]{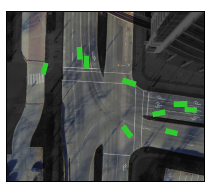}
        \caption{Manual bridge (\texttt{DB-arch})}
        \label{fig:banner-sde}
   \end{subfigure}\hfill
   \begin{subfigure}{0.24\textwidth}
        \captionsetup{justification=centering}
        \includegraphics[width=\textwidth]{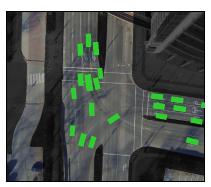}
        \caption{\method{}}
        \label{fig:banner-sde-inp}
    \end{subfigure}\hfill%
    \begin{subfigure}{0.24\textwidth}
        \includegraphics[width=\textwidth]{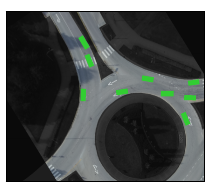}
        \caption{}
        \label{fig:banner-e}
    \end{subfigure}\hfill
    \begin{subfigure}{0.24\textwidth}
        \includegraphics[width=\textwidth]{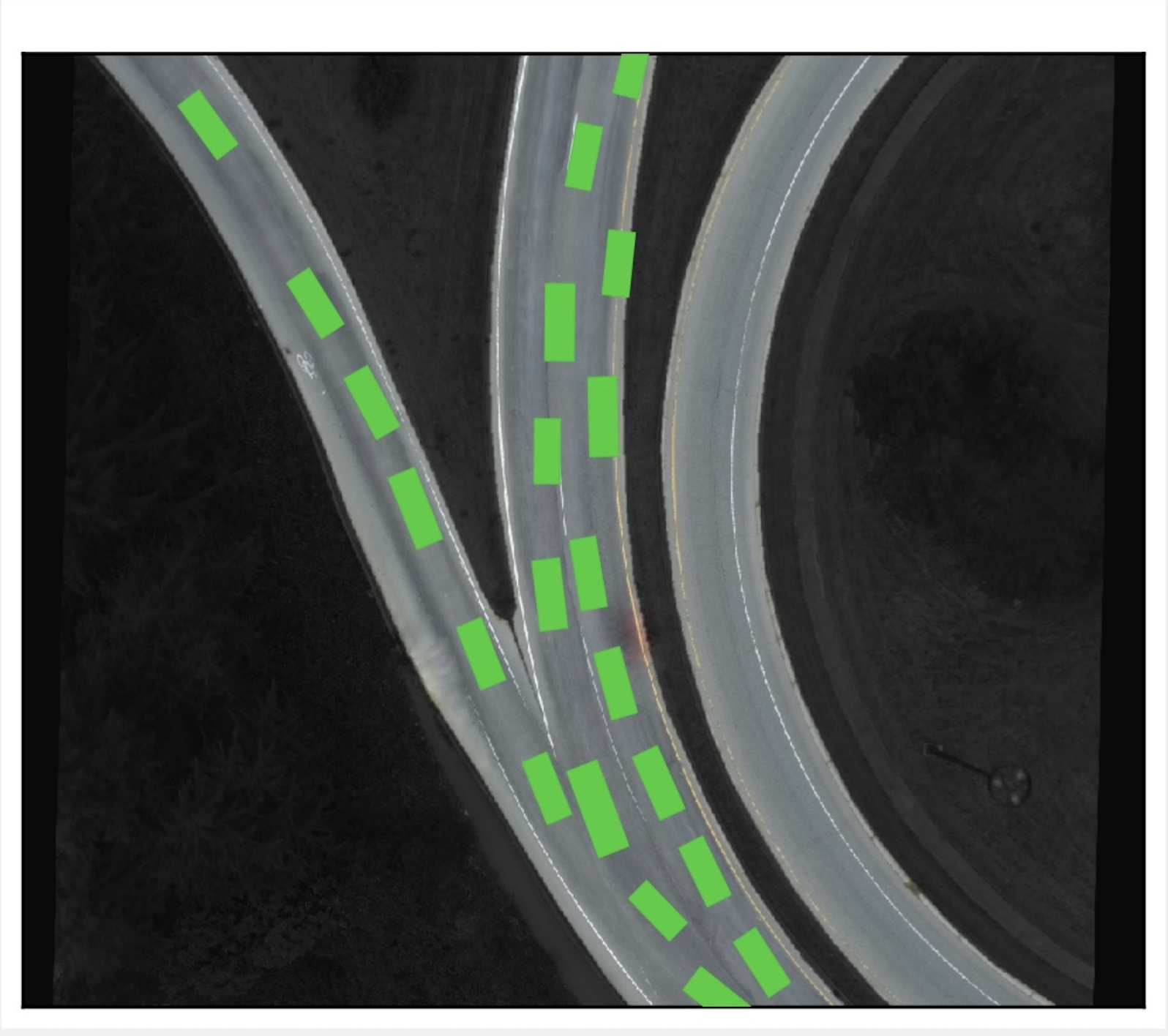}
        \caption{}
        \label{fig:banner-f}
    \end{subfigure}\hfill
    \begin{subfigure}{0.24\textwidth}
       \includegraphics[width=\textwidth]{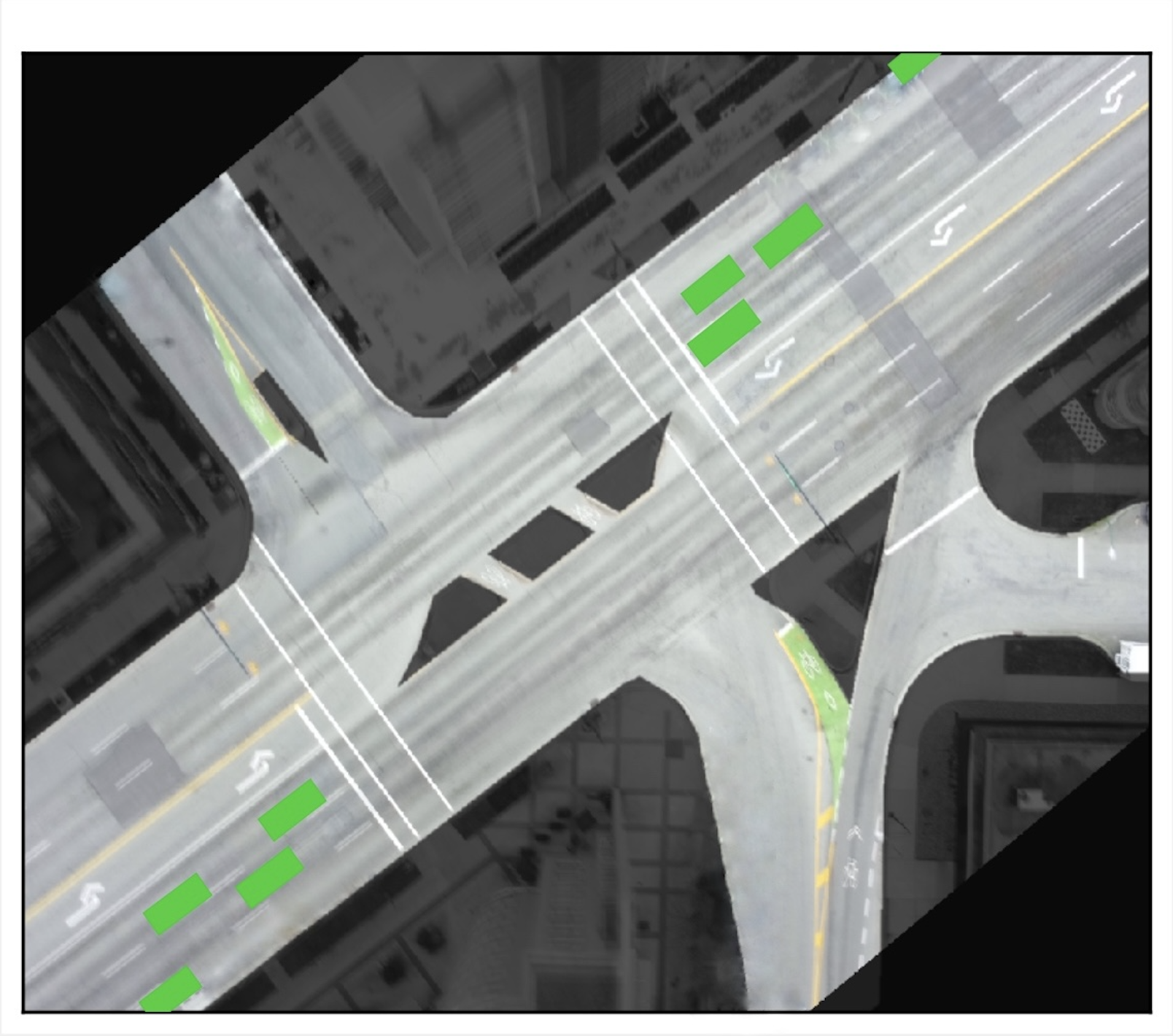}
        \caption{}
       \label{fig:banner-g}
    \end{subfigure}\hfill
    \begin{subfigure}{0.24\textwidth}
        \includegraphics[width=\textwidth]{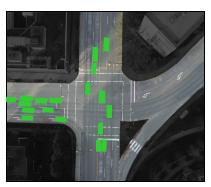}
        \caption{}
        \label{fig:banner-h}
    \end{subfigure}
    \caption{\method{} applied to traffic scene generation. The goal is to place  vehicles on a given bird's-eye view image of a map; indicated here as the ``light'' region of an underlying aerial image. The model output is the set of ``cars'' (infraction-free cars are  green; cars involved in infractions are yellow). The top row shows samples from different models given the same map.  The standard diffusion sample (\subref{fig:banner-baseline}) contains a collision infraction.
     The rest of the top row shows different architectural mechanisms to avoid infractions.  Both conditional diffusion 
     (\subref{fig:banner-conditional}) and (\subref{fig:banner-sde}) are not realistic:
    they both distort the distribution, albeit in different ways, this effect being more apparent in (\subref{fig:banner-sde}). A sample from \method{} in (\subref{fig:banner-sde-inp}) shows no infractions while remaining realistic. The second row shows samples from \method{} on additional maps.}
    \label{fig:banner}
\end{figure*}

Consider the problem of realistically distributing agents in top-down, two-dimensional space; cars, pedestrians, etc.  There are several characteristics of such a distribution that are sufficiently close to being constraints that they may as well be.   Cars and other vehicles are both constrained to be ``on road'' and also to be not overlapping (colliding).  Existing generative models fit to even very large datasets of such data struggle in the sense that samples from them exhibit ``infractions'' (constraint violations) at excessive rates \cite{zwartsenberg2022conditional,jiang2024scenediffuser,niedoba2024diffusion}; even when the data on which they are trained is cleaned to contain only non-infracting examples.

Why does this occur?  In the non-parametric limit this problem would not exist. However, in any finite-data and finite-capacity model the particular generalization strategy the model employs remains a degree of freedom.  We are motivated to seek modeling approaches that allow us to direct and exclude generalizations that place mass outside whatever problem specific constraint set there is.

Various approaches to this have been developed \cite{chang2024safe, huang2024versatile}; this paper explores diffusion-bridge-like mechanisms, inspired by diffusion bridge theory, that empirically demonstrate superior constraint-satisfaction at little expense to generalization otherwise.  We call these mechanisms ``manual bridges'' to distinguish them from the more formally mathematically delimited bridge functions employed in the greater diffusion bridge literature \cite{schauer2017guided,de2021diffusion,wang2021deep,heng2021simulating,chen2021likelihood,zhou2024denoising,shi2024diffusion}.

Our central contribution is an architecture that allows both for imposition of so called ``manual bridges'' to impose constraints in diffusion-based generative models and stable training of models that are constrained in this way, resulting in ``manually bridged models,'' a novel family of generative models that are capable of fitting complex distributions well while also respecting and representing the sharp boundaries imposed by constraints.

\section{Related Work}
A framework for constraints on the sample space of generative models was first theoretically described in \cite{hanneke2018actively}. In their framework, constraints are represented through a black-box oracle function labeling samples as valid or invalid. This problem has been practically explored for generative adversarial networks (GANs) \cite{kong2023data} and diffusion models \cite{naderiparizi2024dont}.
More recently, \citet{christopher2024constrained} proposed a method for generating constraint satisfying samples from pre-trained diffusion models. It requires a projection operator the constraint set, which is generally intractable for complex constraints. Moreover, they employ a Langevin dynamics-based \citep{welling2011bayesian} sampler which is slow to converge.
The main idea in all of these methods was to improve generative models by incorporating information from the constraints. In this paper, however, the goal is to construct a model family that does not generate invalid samples by design.

Recognizing the expressivity of diffusion models, various approaches to incorporating pre-defined constrains into them have emerged in the literature. \citet{lou2023reflected} proposed reflected diffusion models that enforce the whole diffusion sampling trajectory to remain bounded in a convex set. \citet{fishman2023diffusion,fishman2024metropolis} extended reflected diffusion models to support more general constraints. However, their approaches are only evaluated on low-dimensional problems with simple constraints. Moreover, reflected diffusion makes the forward process, and consequently training, expensive.  \citet{fishman2023diffusion} also proposed a barrier function based approach for constrained diffusion models. \citet{liu2024mirror} used barrier functions to transform constrained domains into unconstrained dual ones. Both these methods only support convex constraints.

Another closely related body of work is diffusion bridges, stochastic processes that are guaranteed to end in a given constraint set. \citet{wu2022diffusion} developed a set of mathematically sufficient conditions for designing diffusion bridges to a given constraint set.  The follow-up work of \citet{liu2023learning} used bridges to formulate diffusion models on discrete sets. They also provided closed-form bridges for a restricted set of constraints such as product of intervals. These closed form bridges quickly become intractable as the constraint set gets more complex.

\section{Background}
\subsection{Diffusion Models}\label{sec:background:dm}
Diffusion models \cite{sohl2015deep, song2019generative, song2020score} are a class of generative models that learn to invert a stochastic process, known as the ``forward process,'' that gradually adds noise to samples from a data distribution $q_0(\rvx_0)$. The forward process is formulated as an SDE:
\begin{equation}
    d\rvx_t = f(\rvx_t; t) dt + g(t) d \rvw, \qquad \rvx_0 \sim q_0,
    \label{eq:forward-process}
\end{equation}
where $f$ and $g$ are drift and diffusion functions and $\rvw$ is the standard Wiener process. The forward process is designed such that the SDE's solution at time $T$ is $q_T(\rvx_T) \approx \pi(\rvx_T)$ for some known $\pi$ typically equal to $\gN(\vzero, \mI)$.

\citet{anderson1982reverse} showed that the path measure on the continuous trajectories following the forward process in \cref{eq:forward-process} is identical to the one governed by the following time-reversed SDE:
\begin{equation}
    d\rvx_t = [f(\rvx_t; t) - g^2(t) \nabla_x \log q_t(\rvx_t)]\,dt + g(t)\,d\bar{\rvw},
    \label{eq:reverse-process}
\end{equation}
where $\rvx_T \sim \pi$ and $\bar{\rvw}$ is the Wiener process when time flows backwards. Diffusion models learn the score function $\score_\theta(\rvx_t;t) \approx \nabla_x\log q_t(\rvx_t)$ and approximate the reverse process in \cref{eq:reverse-process} by:
\begin{equation}
    d\rvx_t = [f(\rvx_t; t) - g^2(t) \score_\theta(\rvx_t; t)]\,dt + g(t)\,d\bar{\rvw},
\label{eq:reverse-process-approx}
\end{equation}
where $\rvx_T \sim \pi(\rvx_T)$. One can learn this score function by minimizing \cite{vincent2011connection}
\begin{align}
    \meanp{t,\rvx_0,\rvx_t}{\lambda(t) \norm{\score_\theta(\rvx_t; t) - \nabla_x\log q(\rvx_t | \rvx_0)}^2},
    \label{eq:objective-dm}
\end{align}
where $\rvx_0\sim q_0$ and $\rvx_t\sim q_t(\cdot | \rvx_0)$ in the expectation and $\lambda : [0, T] \rightarrow \R^+$ is a weighting function.
Once trained, one can generate data from the model by sampling $\rvx_T \sim \pi$ and simulating the approximated reverse process in \cref{eq:reverse-process-approx}.

In the remainder of this paper we use $q$ and $p_\theta$ respectively to denote the probability density function of the forward and reverse process. $P_\theta$ and $Q$ denote the probability mass functions associated with $p$ and $q$. This applies to the marginals, conditionals, and posteriors as well.
Furthermore, to reduce notational clutter throughout the rest of the paper, we omit the explicit mention of $\theta$ and $t$ when their meaning is evident from the context.

\subsection{Constrained Generative Modeling}
Constrained generative modeling tackles the problem of learning and generating from a distribution within a bounded domain $\Omega\subset \R^d$. This constrained domain $\Omega$ is either described by explicit constraints, e.g., a set of linear inequalities \cite{lou2023reflected}, or implicitly via binary functions taking $\rvx \in \R^d$ as inputs and indicating whether the constraints are satisfied \cite{naderiparizi2024dont}.
The training data in such problem is guaranteed to satisfy the given constraints. Formally, the dataset $\gD = \{\rvx_0^i\}_{i=1}^N$ follows a data distribution $q_0$ such that $Q_0(\Omega) = 1$.
The goal of constrained generative modeling is to approximate $q_0$ while being bounded to $\Omega$. A maximum likelihood estimation objective for this problem is formulated as
\begin{equation}
    \argmin_\theta \kl{q_0}{p_\theta} \quad \text{s.t. } P_\theta(\rvx \in \Omega) = 1.
\end{equation}

\subsection{Diffusion Bridges}\label{sec:background:db}
Diffusion bridges for constrained generative modeling was introduced by \citet{wu2022diffusion,liu2023learning}. It is a generalized framework of Brownian bridge processes and requires the constraint boundaries to be explicitly stated. For a constraint set $\Omega \subset \R^d$, a function $\gB^\Omega(\rvx_t, t)$ defined on $\R^d \times \R^+$ is an $\Omega$-bridge for the reverse process in \cref{eq:reverse-process-approx} if the solutions of
\begin{equation}
    d \rvx_t = [\nu_\theta(\rvx_t; t) - g^2(t) \gB^\Omega(\rvx_t; t)] dt + g(t) d\bar{\rvw},
    \label{eq:reverse-bridged-process}
\end{equation}
at final time $t=0$ are guaranteed to be in $\Omega$. Here $\rvx_T \sim \pi$ and $\nu_\theta(\rvx_t; t) := f(\rvx_t; t) - g(t)^2 s_\theta(\rvx_t; t)$.
Intuitively, the extra injected term $\gB^\Omega(\rvx_t; t)$ is quantified through external constraint functions, which drifts the particle to move towards the boundary and stay inside $\Omega$.
\citet{wu2022diffusion} provides a set of sufficient conditions for \cref{eq:reverse-bridged-process} to constitute a valid diffusion bridge.
However, these requirements on $\gB^\Omega$ make them practically applicable only to a very limited set of problems. For completeness, we provide these requirements in the appendix.
\citet{liu2023learning} proposes a particular diffusion bridge and shows it satisfies the necessary conditions. It, however, requires closed form access an expectation that is only tractable for very simple constraints such as product of intervals. We discuss this more in \cref{sec:method:manual-bridge}. For completeness, we show the exact form of this bridge in the appendix.
Furthermore, we show it corresponds to an optimal diffusion model trained on $\gU(\Omega)$, a uniform distribution on $\Omega$.

\section{Methodology}
In this section, we explain ``Manually Bridged Models'' (MBM), our approach to constrained generative modeling with manual bridges.
\method{} incorporates the complex constraint information into the model leading to a formulation similar to diffusion bridges. We show that such bridged models parameterize a family of sequence of distributions that only place mass on $\Omega$ at diffusion time $t=0$. The models are then trained using the same objective as standard diffusion models.

\subsection{Manual Bridges}\label{sec:method:manual-bridge}
Here, we first formally define the notion of manual bridges. Next, we explain how they are incorporated in diffusion models. Finally, we show how to combine multiple bridges to get a model that satisfies a set of given constraints.

\begin{definition}[$\Omega$-distance function]
    \label{def:omega-distance-fn}
    Let $\distancefn^\Omega: \R^d \times [0, T] \to \R^{\geq 0}$ be a continuous and almost everywhere differentiable function with finite gradients w.r.t. $x$. We call $\distancefn^\Omega$ an $\Omega$-distance function when $\distancefn^\Omega(\rvx; 0) = 0$ if and only if $\rvx \in \Omega$.
\end{definition}
\begin{definition}[Manually bridged model]
    \label{def:manually-bridged-model}
    Given a diffusion model $\score_\theta(\rvx_t, t)$, an $\Omega$-distance function $\distancefn^\Omega(\rvx; t)$, and a $C^1$-function $\gamma: [0, T] \to \R^+$ such that $\gamma(T) \approx 0$ and $\lim_{t \downarrow 0} \gamma(t) = \infty$, a \textbf{manual bridge} is defined as $\bridge^\Omega(\rvx; t) := -\gamma(t) \nabla_x \distancefn^\Omega(\rvx; t)$. A \textbf{manually bridged model} is defined as
    \begin{equation}
        \label{eq:manual-bridge-family}
        \score^\Omega_\theta(\rvx; t, \gamma, \distancefn) := \score_\theta(\rvx; t) + \bridge^\Omega(\rvx; t).
    \end{equation}
\end{definition}
Intuitively, the added manual bridge term guides the distribution towards $\Omega$. Manually bridged models correspond to score functions of distributions of the form $p^\Omega(\rvx; t) \propto p(\rvx; t) \exp(-\gamma(t) \distancefn^\Omega(\rvx; t))$. Since $\gamma$ smoothly changes from zero at $t=T$ to infinity at $t=0$, $p^\Omega(\rvx; t)$ smoothly interpolates between $p^\Omega(\rvx; T) = p(\rvx; T)$ at $t=T$ and $p^\Omega(\rvx; 0) \propto p(\rvx; 0) \1_\Omega(\rvx)$.
\begin{proposition}
    \label{prop:mbm-sequence-of-distributions}
    Let $s_\theta(\rvx, t)$ be a score function corresponding to a density $p_\theta(\rvx, t)$. If $s_\theta(\rvx, t)$ is continuous in $t$ and $p_\theta(\rvx, t)$ is finite for $\rvx \notin \Omega$, then the manually bridged model in \cref{def:manually-bridged-model} results in a sequence of distributions that only place mass on $\Omega$ at time $t=0$.
\end{proposition}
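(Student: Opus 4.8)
The plan is to work with the tilted-density representation noted just above the proposition and reduce everything to a statement about the family $p^\Omega(\cdot; t)$. First I would verify that the bridged score is genuinely the score of $p^\Omega(\rvx; t) \propto p_\theta(\rvx; t)\exp(-\gamma(t)\distancefn^\Omega(\rvx; t))$: taking $\nabla_x \log$ of the right-hand side returns $\score_\theta(\rvx;t) - \gamma(t)\nabla_x\distancefn^\Omega(\rvx;t) = \score_\theta + \bridge^\Omega = \score^\Omega_\theta$, as required, so the manually bridged model is exactly the score of this tilted family. Writing $A(t) = \int_{\Omega^c} p_\theta(\rvx;t)\exp(-\gamma(t)\distancefn^\Omega(\rvx;t))\,d\rvx$ and $B(t)$ for the same integral over $\Omega$, the claim reduces to showing $P^\Omega(\Omega^c; t) = A(t)/(A(t)+B(t)) \to 0$ as $t \downarrow 0$, together with the converse that for $t > 0$ some mass sits outside $\Omega$.

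The converse is immediate: for $t \in (0,T]$ we have $\gamma(t) < \infty$, so $\exp(-\gamma(t)\distancefn^\Omega(\rvx;t)) > 0$ everywhere and hence $p^\Omega(\rvx;t) > 0$ wherever the base density $p_\theta(\rvx;t) > 0$. Since for $t>0$ the diffused density $p_\theta$ has support beyond $\Omega$ (it interpolates toward $\pi = \gN(\vzero,\mI)$), the bridged density places positive mass on $\Omega^c$; concentration therefore happens precisely at $t=0$. For the substantive limit I would first handle the numerator. For fixed $\rvx \notin \Omega$, \cref{def:omega-distance-fn} gives $\distancefn^\Omega(\rvx;0) > 0$, and continuity in $t$ yields $\distancefn^\Omega(\rvx;t) \to \distancefn^\Omega(\rvx;0) > 0$; with $\gamma(t)\to\infty$ the exponent tends to $-\infty$, so the unnormalized integrand vanishes pointwise on $\Omega^c$. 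Bounding it by $p_\theta(\rvx;t)$ (the tilt is at most one) and using that $p_\theta$ is finite and integrable off $\Omega$, a dominated-convergence argument (with a $t$-uniform envelope over $t\in(0,t_0]$, or Scheffé's lemma) gives $A(t)\to 0$.

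The main obstacle is the denominator: I must ensure $B(t)$ does not decay as fast as $A(t)$, so that the ratio actually collapses. The delicate point is that continuity alone does not pin down the product $\gamma(t)\distancefn^\Omega(\rvx;t)$ for $\rvx\in\Omega$, since $\gamma\to\infty$ while $\distancefn^\Omega\to 0$ at an a priori unknown relative rate. The remedy is a Laplace-type scale-separation argument rather than tracking the interior factor exactly: fix a small ball $S \subset \mathrm{int}(\Omega)$ on which, by continuity and $\distancefn^\Omega(\cdot;0)=0$, one has $\distancefn^\Omega(\rvx;t) \le \epsilon(t)$ with $\epsilon(t)\to 0$, and split $\Omega^c$ according to whether $\distancefn^\Omega(\rvx;t)\ge\delta$ for a fixed $\delta>0$. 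On the bulk of $\Omega^c$ the tilt is at most $e^{-\gamma(t)\delta}$, whereas $B(t) \ge e^{-\gamma(t)\epsilon(t)}\int_S p_\theta(\rvx;t)\,d\rvx$, so the competing exponential scales behave like $e^{-\gamma(t)(\delta-\epsilon(t))}\to 0$ as soon as $\epsilon(t)<\delta$.

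Establishing that $S$ and $\delta$ can be chosen so that the interior distance scale stays uniformly below the exterior one as $t\downarrow 0$ is exactly where all the hypotheses get used — continuity and finite gradients of $\distancefn^\Omega$, the $t=0$ dichotomy of \cref{def:omega-distance-fn}, and $\gamma(t)\to\infty$ — and is where I expect the real work to lie. Combining $A(t)\to 0$ with this lower bound on $B(t)$ forces $P^\Omega(\Omega^c;t)\to 0$, so at $t=0$ the bridged model places all of its mass on $\Omega$, with limiting density $p^\Omega(\rvx;0) \propto p_\theta(\rvx;0)\,\1_\Omega(\rvx)$, which together with the converse proves the proposition.
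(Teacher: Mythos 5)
Your starting point --- identifying the manually bridged score with the score of the tilted family $p^\Omega(\rvx;t) \propto p_\theta(\rvx;t)\exp(-\gamma(t)\distancefn^\Omega(\rvx;t))$, and killing the off-$\Omega$ contribution because $\distancefn^\Omega(\rvx;0)>0$ there while $\gamma(t)\to\infty$ --- is exactly the paper's mechanism (their \cref{lemma:app:converge-zero}). But you then reformulate the goal as convergence of the \emph{mass} $P^\Omega(\Omega^c;t)=A(t)/(A(t)+B(t))\to 0$, which is strictly harder than what the paper proves and is where your argument does not close. The paper only establishes the pointwise statement: it lower-bounds the normalizer by $Z(t)\ge\int_\Omega p_\theta(\rvx;t)\exp(-\gamma(t)\distancefn^\Omega(\rvx;t))\,d\rvx=\int_\Omega p_\theta(\rvx;t)\,d\rvx=:\alpha>0$, using that $\distancefn^\Omega(\cdot;t)$ vanishes on $\Omega$ \emph{for every} $t$ (an implicit strengthening of \cref{def:omega-distance-fn}, which only constrains $t=0$), then shows the unnormalized density tends to $0$ pointwise for each $\rvx\notin\Omega$, and concludes by continuity that $p^\Omega(\rvx;0)=0$ off $\Omega$ --- a density vanishing on $\Omega^c$ assigns it no mass, so no integral-convergence step is needed.

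The two gaps in your version are concrete. First, your dominated-convergence step for $A(t)\to 0$ has no valid envelope: the hypothesis gives only that $p_\theta(\rvx;t)$ is \emph{finite} off $\Omega$ (the paper reads this as a uniform bound $M$), and a constant is not integrable over an unbounded $\Omega^c$; the bound ``tilt $\le 1$'' leaves you dominating by $p_\theta(\rvx;t)$ itself, which is $t$-dependent. Second, and more seriously, your lower bound $B(t)\ge e^{-\gamma(t)\epsilon(t)}\int_S p_\theta$ requires controlling $\gamma(t)\epsilon(t)$ where $\epsilon(t)=\sup_{\rvx\in S}\distancefn^\Omega(\rvx;t)$, and the stated hypotheses cannot deliver this: continuity in $t$ gives $\epsilon(t)\to 0$ but says nothing about its rate relative to $\gamma(t)\to\infty$, so the ``real work'' you defer is not actually completable from \cref{def:omega-distance-fn} alone --- you need the same extra assumption the paper uses, namely $\distancefn^\Omega(\rvx;t)=0$ for $\rvx\in\Omega$ at all $t$, which makes $B(t)=\int_\Omega p_\theta(\rvx;t)\,d\rvx$ exactly and dissolves the difficulty. (Your exponential-ratio comparison $e^{-\gamma(t)(\delta-\epsilon(t))}\to 0$ is fine for the bulk of $\Omega^c$, but the thin region of $\Omega^c$ where $\distancefn^\Omega(\rvx;t)<\delta$ is also left uncontrolled.) The fix is to abandon the mass formulation: bound $Z(t)$ below as the paper does, prove pointwise vanishing of $p^\Omega(\rvx;t)$ for each fixed $\rvx\notin\Omega$, and invoke continuity at $t=0$.
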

Proof of this proposition is provided in the appendix.

\begin{figure*}[t]
    \centering
    \begin{subfigure}{0.15\textwidth}
        \centering
        \captionsetup{justification=centering}
        \includegraphics[scale=1]{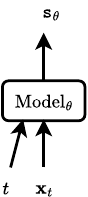}
        \caption{Standard diffusion}
        \label{fig:implementation-diagrams:standard}
    \end{subfigure}\hfill
    \begin{subfigure}{0.22\textwidth}
        \centering
        \captionsetup{justification=centering}
        \includegraphics[scale=1]{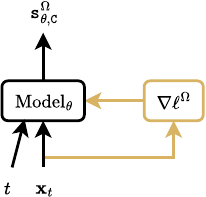}
        \caption{Conditional diffusion\\(\texttt{C-arch})}
        \label{fig:implementation-diagrams:c}
    \end{subfigure}\hfill
    \begin{subfigure}{0.3\textwidth}
        \centering
        \captionsetup{justification=centering}
        \includegraphics[scale=1]{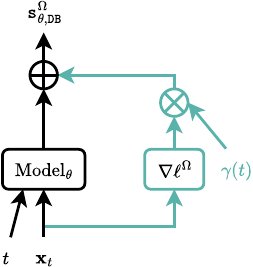}
        \caption{Diffusion Bridge architecture\\(\texttt{DB-arch})}
        \label{fig:implementation-diagrams:db}
    \end{subfigure}\hfill
    \begin{subfigure}{0.3\textwidth}
        \centering
        \captionsetup{justification=centering}
        \includegraphics[scale=1]{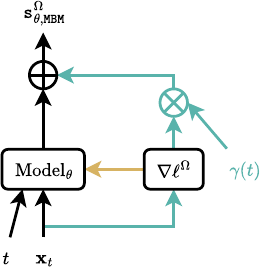}
        \caption{Our architecture\\(\texttt{MBM-arch})}
        \label{fig:implementation-diagrams:mbm}
    \end{subfigure}
    \caption{Score function architectural variants considered.  The latter three use the same ``manual bridge,'' with the last notably including an additional path for the bridge function gradient not previously considered in the literature. 
 Each diagram shows a single denoising step. In these diagrams the input $t$ to the model is omitted for conciseness.}
    \label{fig:implementation-diagrams}
\end{figure*}

\begin{proposition}[Combining Manual Bridges]
\label{prop:bridge-combination}
Let $\bridge^{\Omega_1}(\rvx; t) = -\gamma_1(t) \nabla_\rvx \distancefn^{\Omega_1}(\rvx; t)$ and $\bridge^{\Omega_2}(\rvx; t) = -\gamma_2(t) \nabla_\rvx \distancefn^{\Omega_2}(\rvx; t)$ be two manual bridges as defined in \cref{def:manually-bridged-model}. If $\overline{\Omega} := \Omega_1 \cap \Omega_2 \neq \emptyset$, the combined bridge $\bridge^{\overline{\Omega}}(\rvx; t) = \bridge^{\Omega_1}(\rvx; t) + \bridge^{\Omega_2}(\rvx; t)$ is also a manual bridge to $\Omega_1 \cap \Omega_2$. Therefore, the space of manual bridges is closed under addition.
\end{proposition}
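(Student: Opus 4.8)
The plan is to verify \cref{def:manually-bridged-model} directly: to call the combined drift a manual bridge to $\overline{\Omega}$, I must exhibit a single $C^1$ schedule $\gamma$ with $\gamma(T)\approx 0$ and $\lim_{t\downarrow 0}\gamma(t)=\infty$, together with an $\overline{\Omega}$-distance function $\distancefn^{\overline{\Omega}}$, so that $\bridge^{\overline{\Omega}}(\rvx;t) = -\gamma(t)\nabla_\rvx\distancefn^{\overline{\Omega}}(\rvx;t)$. Since the two input bridges carry different schedules $\gamma_1,\gamma_2$, the natural first move is to set $\gamma := \gamma_1+\gamma_2$ and define the combined potential as the schedule-weighted average
\begin{equation}
    \distancefn^{\overline{\Omega}}(\rvx;t) := \frac{\gamma_1(t)\,\distancefn^{\Omega_1}(\rvx;t) + \gamma_2(t)\,\distancefn^{\Omega_2}(\rvx;t)}{\gamma_1(t)+\gamma_2(t)}.
\end{equation}
Because the weights depend only on $t$, the operator $\nabla_\rvx$ passes straight through them, and a one-line computation gives $-\gamma\nabla_\rvx\distancefn^{\overline{\Omega}} = -\gamma_1\nabla_\rvx\distancefn^{\Omega_1}-\gamma_2\nabla_\rvx\distancefn^{\Omega_2} = \bridge^{\Omega_1}+\bridge^{\Omega_2}$, which is exactly the claimed identity. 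This choice is essentially forced: matching the gradients fixes $\distancefn^{\overline{\Omega}}$ up to an $\rvx$-independent additive term, and non-negativity of a distance function pins that term to zero.

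Next I would discharge the routine conditions. The schedule $\gamma=\gamma_1+\gamma_2$ is a sum of positive $C^1$ functions, hence positive and $C^1$, with $\gamma(T)\approx 0$ and $\gamma(t)\to\infty$ as $t\downarrow 0$, so it meets \cref{def:manually-bridged-model}. To see that $\distancefn^{\overline{\Omega}}$ satisfies \cref{def:omega-distance-fn}, I note that for each fixed $t$ it is a convex combination of the two given distance functions with weights $w_i:=\gamma_i/\gamma\in(0,1)$ summing to one; continuity, almost-everywhere differentiability in $\rvx$, and finiteness of $\nabla_\rvx\distancefn^{\overline{\Omega}} = w_1\nabla_\rvx\distancefn^{\Omega_1}+w_2\nabla_\rvx\distancefn^{\Omega_2}$ are then inherited termwise. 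The hypothesis $\overline{\Omega}=\Omega_1\cap\Omega_2\neq\emptyset$ enters precisely here, guaranteeing that the zero set of $\distancefn^{\overline{\Omega}}$ is non-empty.

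The crux is the defining zero-set property $\distancefn^{\overline{\Omega}}(\rvx;0)=0 \iff \rvx\in\overline{\Omega}$, read as the $t\downarrow 0$ limit by continuity. The forward implication is immediate: if $\rvx\in\overline{\Omega}$ then $\distancefn^{\Omega_1}(\rvx;t),\distancefn^{\Omega_2}(\rvx;t)\to 0$, and since the weights stay in $(0,1)$ the average vanishes. The reverse implication is where I expect the real work: for $\rvx\notin\overline{\Omega}$, say $\rvx\notin\Omega_1$ with $\distancefn^{\Omega_1}(\rvx;0)=a>0$, the combined value is at least $w_1(t)\,a$, which is bounded away from zero exactly when $\liminf_{t\downarrow 0} w_1(t)>0$, i.e. when the two schedules are comparable near $t=0$ (in particular for the natural design choice $\gamma_1=\gamma_2$, where $\distancefn^{\overline{\Omega}}$ reduces to the plain average $\tfrac{1}{2}(\distancefn^{\Omega_1}+\distancefn^{\Omega_2})$ and the conclusion is automatic). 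I would therefore carry out this step under that mild comparability condition, and then cross-check robustness through the distributional view of \cref{prop:mbm-sequence-of-distributions}: the combined model is the score of $p^{\overline{\Omega}}(\rvx;t)\propto p(\rvx;t)\exp(-\gamma_1\distancefn^{\Omega_1}-\gamma_2\distancefn^{\Omega_2})$, and since $\gamma_i(t)\distancefn^{\Omega_i}(\rvx;t)\to\infty$ whenever $\rvx\notin\Omega_i$, the density is extinguished off $\Omega_1\cap\Omega_2$ at $t=0$ irrespective of the schedules' relative rates. Closure of the space of manual bridges under addition is then exactly the binary statement just established, extending to any finite family by induction whenever the common intersection is non-empty.
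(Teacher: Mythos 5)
Your proof is correct and takes essentially the same route as the paper's: the paper factors out $\gamma_1$ and sets $\distancefn^{\overline{\Omega}} = \distancefn^{\Omega_1} + \tfrac{\gamma_2}{\gamma_1}\distancefn^{\Omega_2}$ under a ``without loss of generality'' assumption that $\lim_{t\downarrow 0}\gamma_2(t)/\gamma_1(t) \neq 0$, whereas you factor out $\gamma_1+\gamma_2$ and take the convex combination, which is the same reparametrization up to normalization. The comparability condition you flag explicitly ($\liminf_{t\downarrow 0} w_1(t) > 0$) is exactly the caveat the paper buries in that WLOG, and your normalization is marginally safer in that it keeps $\distancefn^{\overline{\Omega}}$ finite even when $\gamma_2/\gamma_1 \to \infty$.
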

\begin{proof}
    Without loss of generality, assume $\lim_{t \downarrow 0} \frac{\gamma_2(t)}{\gamma_1(t)} \neq 0$. Let $\distancefn^{\overline{\Omega}}(\rvx; t) := \distancefn^{\Omega_1}(\rvx; t) + \frac{\gamma_2(t)}{\gamma_1(t)} \distancefn^{\Omega_2}(\rvx; t)$.
    Since all functions are continuous in $t$, both $\distancefn^{\Omega_1}$ and $\distancefn^{\Omega_2}$ are distance functions and $\lim_{t \downarrow 0} \frac{\gamma_2(t)}{\gamma_1(t)} \neq 0$, $\distancefn^{\overline{\Omega}}(\rvx; 0)$ is zero if and only if both $\distancefn^{\Omega_1}(\rvx; 0)$ and $\distancefn^{\Omega_2}(\rvx; 0)$ are zero. Therefore, $\distancefn^{\overline{\Omega}}$ is a distance function to $\overline{\Omega}$.
    Further, by definition $\lim_{t \downarrow 0} \gamma_1(t) = \infty$.
    Therefore, $-\gamma_1(t) \nabla_x \distancefn^{\overline{\Omega}}(\rvx; t) = \bridge^{\Omega_1}(\rvx; t) + \bridge^{\Omega_2}(\rvx; t) = \bridge^{\overline{\Omega}}(\rvx; t)$ is a manual bridge to $\overline{\Omega}$.
\end{proof}
Generalizing Proposition 1, one can combine multiple bridges by $\bridge^{\cap_{i=1}^N\{\Omega_i\}} = \sum_{i=1}^{N} {\bridge^{\Omega}_i}(\rvx; t)$.

The objective function of \method{} is the denoising loss with this particular parameterization of the score estimator model
\begin{equation}
    \meanp{t, \rvx_0, \rvx_t}{\lambda(t) \norm{\score_\theta(\rvx_t; t, \gamma, \distancefn) - 
    \nabla_{x}\log q_t(\rvx_t | \rvx_0)}^2}
\end{equation}
where $\rvx_0\sim q_0^\Omega, \rvx_t\sim q_t(\cdot | \rvx_0)$ in the expectation.

\paragraph{Connection to diffusion bridges} 
Note that our manually bridged models give rise to reverse SDEs similar to that of diffusion bridges in \cref{eq:reverse-bridged-process}. To see this, we can plug the manually bridged score function in \cref{eq:reverse-process-approx}:
\begin{align}
    d\rvx_t &= [\nu_\theta(\rvx_t; t) 
    - g^2(t) \bridge^\Omega(\rvx_t; t)]\,dt + g(t)\,d\bar{\rvw},
\end{align}
where $\nu_\theta(\rvx_t; t) \equiv f(\rvx_t; t) - g^2(t)\score_\theta(\rvx_t; t)$.
This is equivalent to \cref{eq:reverse-bridged-process} once $\gB^\Omega(\rvx_t; t) \equiv \bridge^\Omega(\rvx_t; t)$. However, in order to guarantee that solutions of this SDE lie in $\Omega$, the bridge function $\bridge^\Omega(\rvx, t) = -\gamma(t) \nabla_\rvx\distancefn^\Omega(\rvx; t)$ must satisfy requirements such as an expected Polyak-Lojasiewicz condition: $\meanp{t, \rvx_t \sim p_t^\Omega}{\distancefn^\Omega(\rvx_t; t)} \leq \meanp{t, \rvx_t \sim p_t^\Omega}{\norm{\nabla_x \distancefn^\Omega(\rvx_t; t)}^2}$. This greatly restricts the set of allowed $\Omega$-distance functions. Furthermore, it restricts generality of combination of bridges.
Our manually bridged models therefore are not guaranteed to have SDE solutions in $\Omega$. However, as shown in the previous section, they still represent score functions of distributions that converge to one constrained to $\Omega$. Further, since we train the model to approximate the reverse process in \cref{eq:reverse-process}, an SDE with a trained model is likely to converge to this terminal distribution as well. As we show later in the Experiment section, manual bridges empirically perform similarly to diffusion bridges.

\begin{figure*}[t]
    \centering
    \begin{subfigure}{0.24\textwidth}
        \captionsetup{justification=centering}
        \includegraphics[width=4cm]{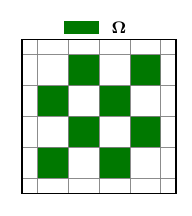}
        \caption{Constraint set}
        \label{fig:toy-vis:constraint}
    \end{subfigure}\hfill
    \begin{subfigure}{0.24\textwidth}
        \captionsetup{justification=centering}
        \includegraphics[width=4cm]{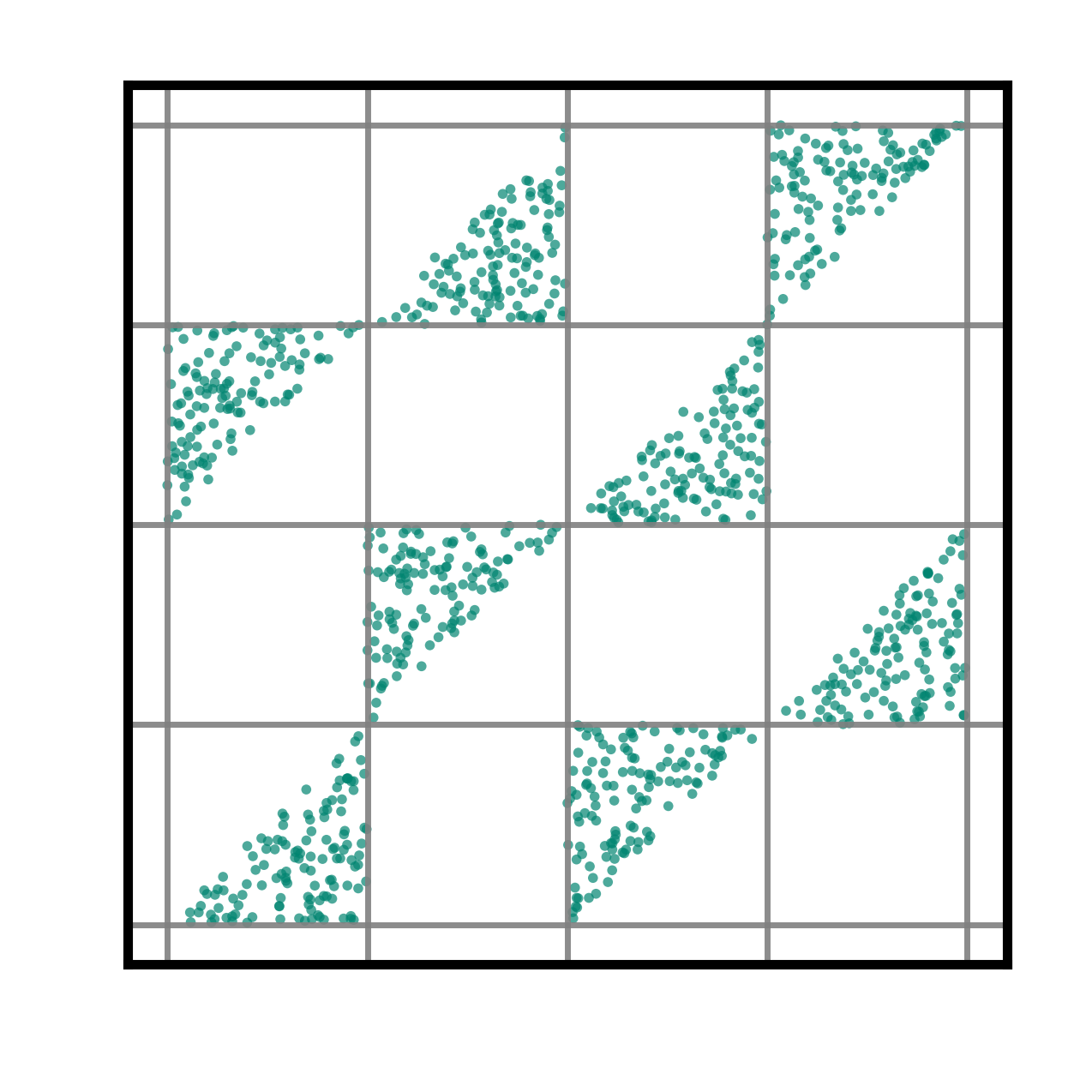}
        \caption{Data distribution}
        \label{fig:toy-vis:data}
    \end{subfigure}\hfill
    \begin{subfigure}{0.24\textwidth}
        \captionsetup{justification=centering}
        \includegraphics[width=4cm]{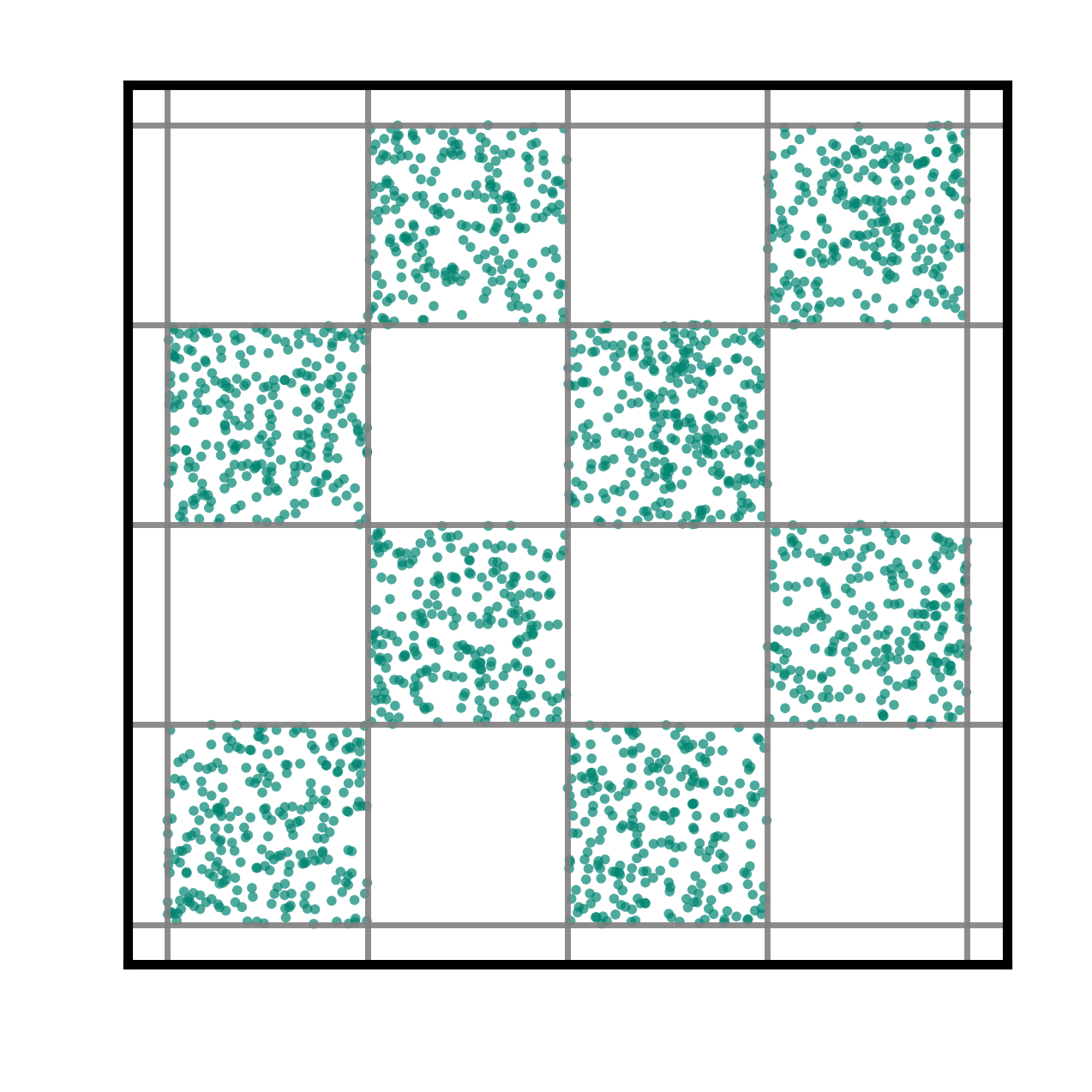}
        \caption{Prior diff. bridge}
        \label{fig:toy-vis:prior-analytic}
    \end{subfigure}\hfill
    \begin{subfigure}{0.24\textwidth}
        \captionsetup{justification=centering}
        \centering
        \includegraphics[width=4cm]{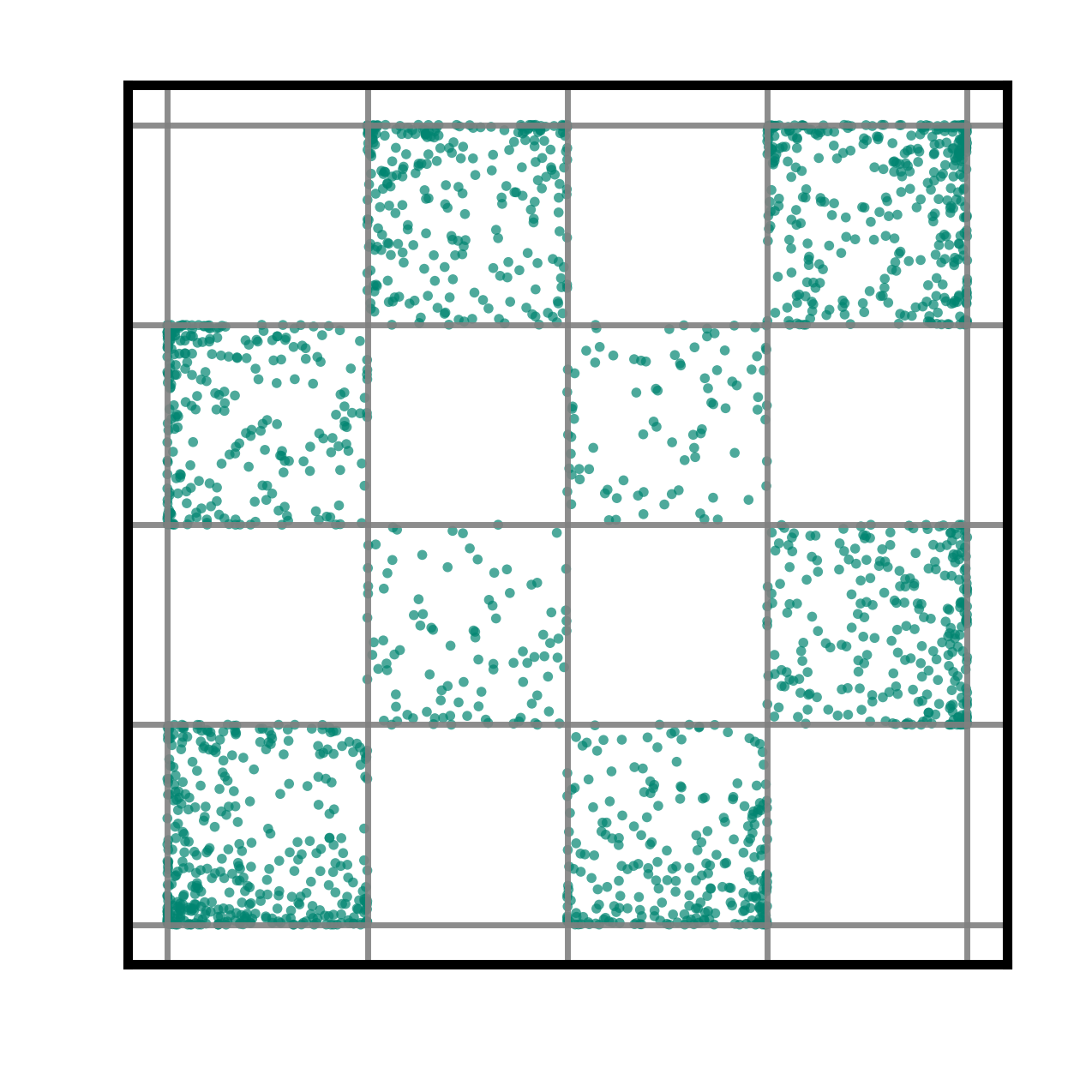}
        \caption{Prior manual bridge}
        \label{fig:toy-vis:prior-manual}
    \end{subfigure}
    \centering
    \begin{subfigure}{0.33\textwidth}
        \centering
        \captionsetup{justification=centering}
        \includegraphics[width=4cm]{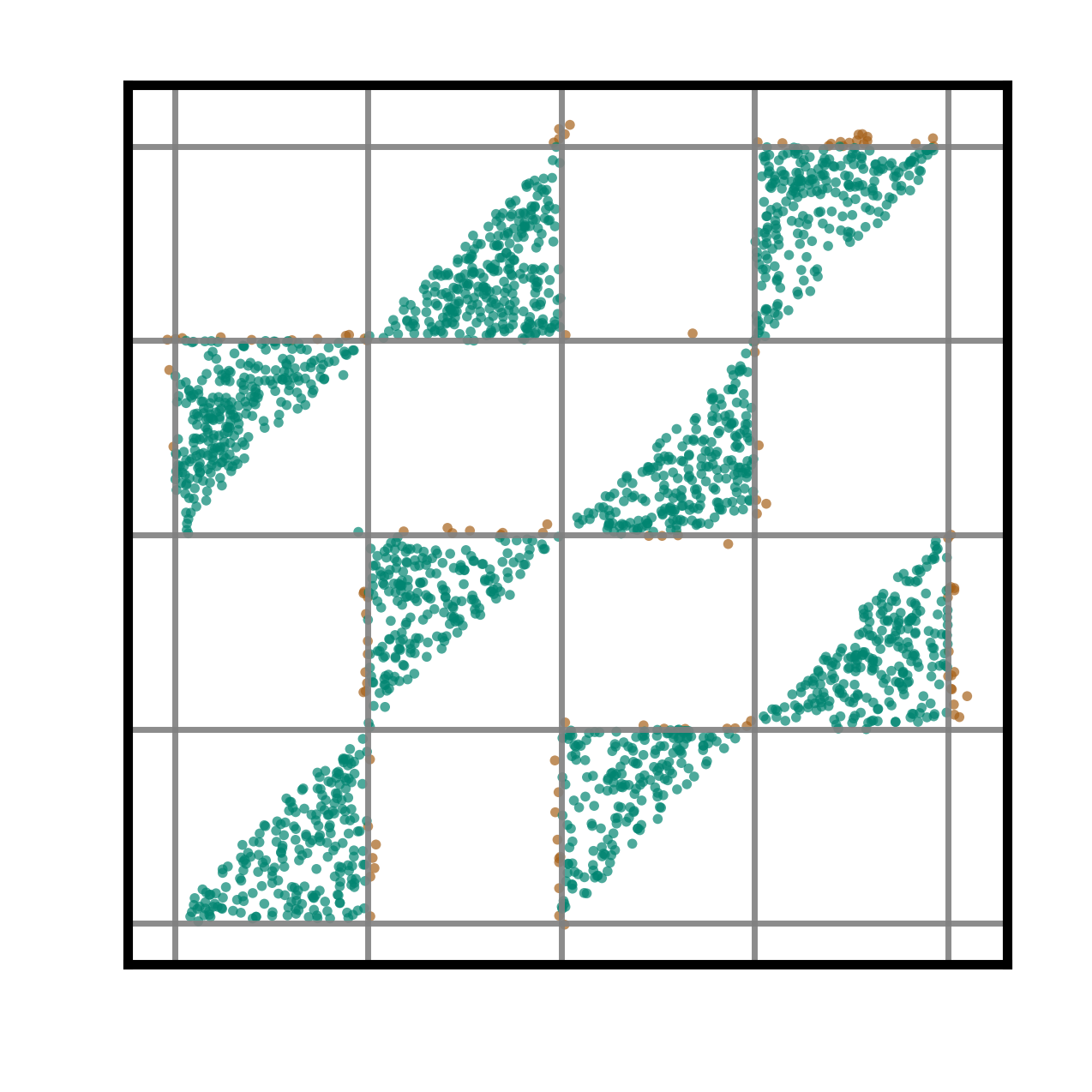}
        \caption{Baseline\\$\,$}
        \label{fig:toy-vis:baseline}
    \end{subfigure}\hfill
    \begin{subfigure}{0.33\textwidth}
        \captionsetup{justification=centering}
        \centering
        \includegraphics[width=4cm]{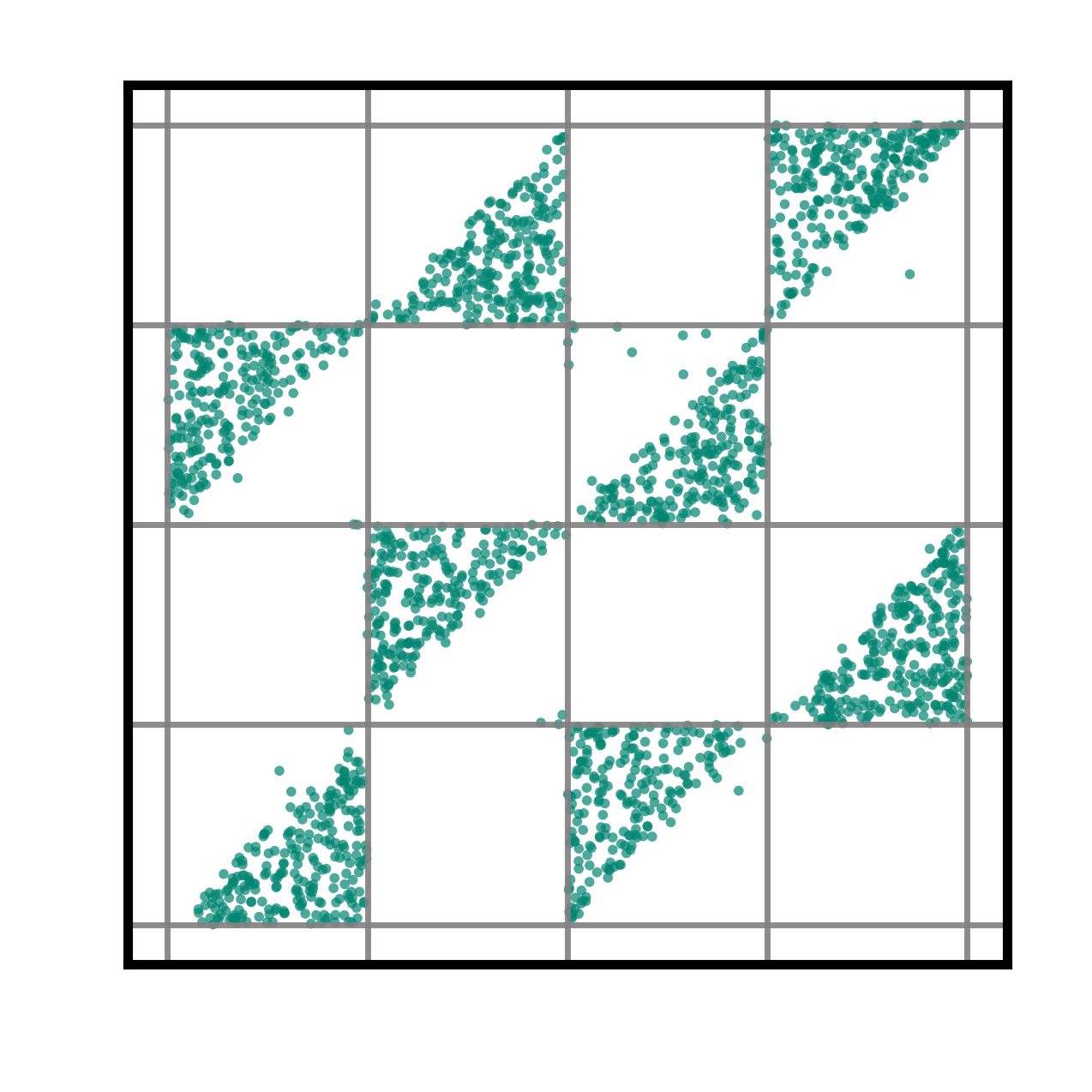}
        \caption{Diffusion bridge~\citep{liu2023learning}\\with \texttt{DB-arch}}
        \label{fig:toy-vis:analytic-sde}
    \end{subfigure}\hfill
    \begin{subfigure}{0.33\textwidth}
        \captionsetup{justification=centering}
        \centering
        \includegraphics[width=4cm]{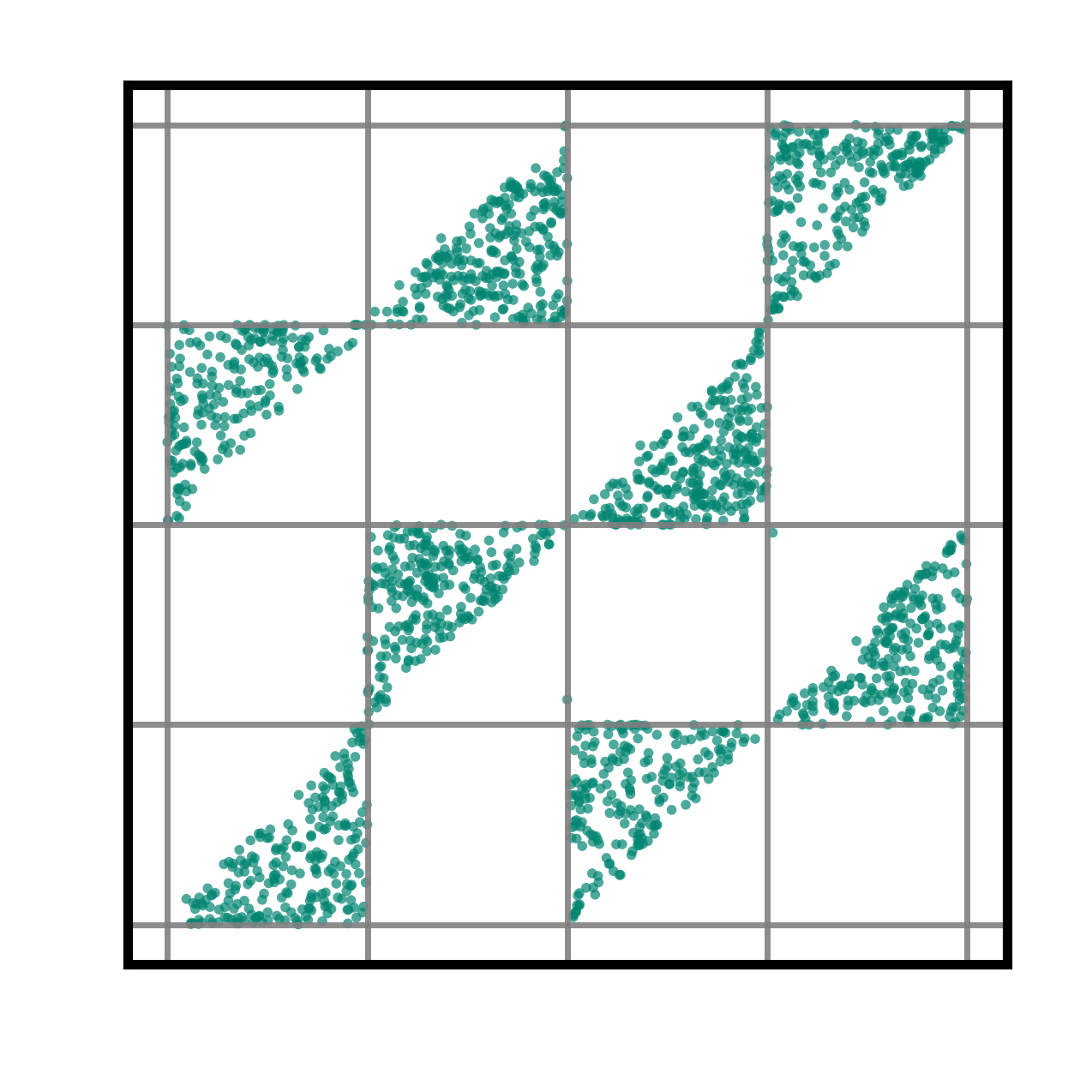}
        \caption{Manual bridge\\with \texttt{MBM-arch} (ours)}
        \label{fig:toy-vis:manual-sde-inp}
    \end{subfigure}\hfill
    \label{fig:toy-vis}
    \caption{Visualization of the checkerboard constraint experiment results. The problem is constrained on a checkerboard pattern and the data has a uniform distribution over triangles within the checkerboard, shown in (\subref{fig:toy-vis:data}). Invalid samples are shown in brown. (\subref{fig:toy-vis:prior-analytic}, \subref{fig:toy-vis:prior-manual}) show the diffusion bridge and manually bridged models without a trained diffusion model. This is effectively the prior distributions in these models. As shown on the second row, bridged models do not produce invalid samples. Further, the manually bridged model gives comparable samples to the diffusion bridges \cite{liu2023learning}. Finally, incorporating the bridges in our proposed way (labelled with \method{}) even improves the diffusion bridge models.}
\end{figure*}

\subsection{Architecture}
\label{sec:method:implementation}
In practice, the typical diffusion bridge method of incorporating manual bridges as in \cref{eq:manual-bridge-family} leads to poor training. This is because the added bridge term increases variance of the loss.
As illustrated in \cref{fig:implementation-diagrams}, we consider three different types of incorporating the bridge information into the model.
\begin{enumerate}
    \item \textbf{Conditional diffusion (\texttt{C-arch})}:  modifies the score network to take (a re-weighted version of) the bridge as an additional input. This effectively becomes a conditional diffusion model,  and those have been proven to be highly effective in, e.g., large-scale text-conditional image generation tasks. Note that this mechanism merely incorporates the information from the distance function to the score network. As such, it is not a bridged model and does not provide guarantees regarding the constraints. However, its training is stable.
    \begin{equation}
        \score_{\theta, \texttt{C}}^\Omega(\rvx_t; t, \gamma, \distancefn) := \score_\theta(\rvx_t; t, \nabla_x \distancefn^\Omega(\rvx_t))
        \label{eq:implementation-conditional-dm}
    \end{equation}
    \item \textbf{Diffusion Bridges (\texttt{DB-arch})}:  offsets the score function using the bridge function. This is the mechanism used in diffusion bridge \cite{wu2022diffusion, liu2023learning}.  This mechanism is effective at constraining the distribution but results in unstable training.  
    \begin{equation}
        \score^\Omega_{\theta, \texttt{DB}}(\rvx_t; t, \gamma, \distancefn) := \score_\theta(\rvx_t; t) + \bridge^\Omega(\rvx_t; t)
        \label{eq:implementation-diffusion-bridges}
    \end{equation}
    \item \textbf{Manually Bridged Models (\texttt{MBM-arch}; ours)}:
    combines the above two mechanisms.  We have found that additionally providing (a re-weighted version of) the bridge to the score network stablizes training resulting in both a good model fit and good constraint satisfaction.
    \begin{equation}
        \score_{\theta, \texttt{MBM}}^\Omega(\rvx_t; t, \gamma, \distancefn) := \score_\theta(\rvx_t; t, \nabla_x \distancefn^\Omega(\rvx_t)) + \bridge^\Omega(\rvx_t; t)
        \label{eq:implementation-ours}
    \end{equation}
\end{enumerate}

While any of the three mechanisms above can be used to incorporate either manual or diffusion bridges, our proposed method, referred to as MBM, involves \textit{manual bridges} applied through the \texttt{MBM-arch} mechanism.

\section{Experiments}
We demonstrate \method{} on a simple 2D synthetic dataset and a traffic scenario generation experiment with collision and offroad avoidance.
Additionally, we include an image watermarking experiment in the appendix.
We release the source code implementing \method{} together with the 2D synthetic and image watermarking experiments.

\begin{figure*}
    \centering
    \includegraphics[scale=1]{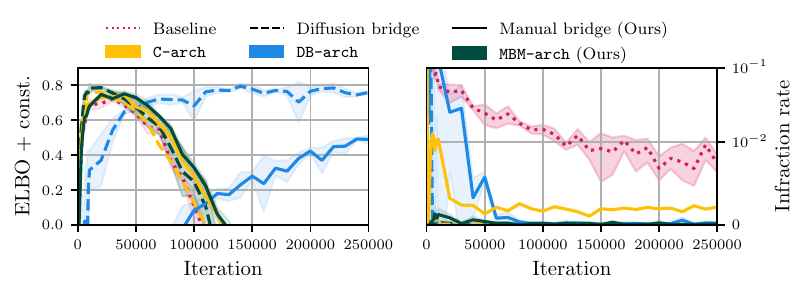}
    \caption{Checkerboard constraint experiment results. Solid, dashed and dotted lines respectively correspond to manual bridge, diffusion bridge and baseline models. Different colors represent different mechanisms of incorporating bridges as shown in \cref{fig:implementation-diagrams}. The shaded area shows standard deviation of the metrics over three models trained with different random seeds.}
    \label{fig:toy-metrics}
\end{figure*}

\subsection{Checkerboard Constraint Experiment}
\paragraph{Setup} We first experiment with a simple 2D dataset with a checkerboard constraint (\cref{fig:toy-vis:constraint}). The data distribution $q_0$ is a mixture of uniform triangles enclosed in the allowed checkerboard area. Our dataset consists of 1,000 samples from this data distribution (\cref{fig:toy-vis:data}). We report results for a baseline diffusion model in \cref{fig:toy-vis:baseline}. This is the standard diffusion as explained in Background section. Samples from this baseline model include quite a few constraint violations.

\paragraph{Diffusion bridge baseline} As stated before, diffusion bridges \citep{liu2023learning} are only tractable on very simple constraints. The checkerboard pattern of this experiment is one example.
As stated before and shown in the appendix, the diffusion bridge for this problem is equivalent an exact diffusion model for uniformly distributed data in the constraint set $\Omega$.
We empirically verify this in \cref{fig:toy-vis:prior-analytic}, where we construct an SDE with only the bridge.
This is equivalent to \cref{eq:reverse-bridged-process} without the $\nu$ term.
As such, the results in Fig. 3c match $\gU(\Omega$).
Consequently, we modify the typical checkerboard data distribution to be non-uniform in the constraint set.  The ``squares'' in \cref{fig:toy-vis:constraint} are constraints.  The data just happens to lie in half of the constraint set.  This modification is necessary to highlight model training within the diffusion bridge framework -- the bridge to the uniform checkerboard is literally the generative model for the typical checkerboard dataset.

Following \citet{liu2023learning}, we implement a model with \texttt{DB-arch} (see \cref{fig:implementation-diagrams:db}).
Once this bridged model is trained, it will match the data distribution while staying within the constraints (\cref{fig:toy-vis:analytic-sde}).

\paragraph{Manual bridges} For manual bridges we use $\distancefn^\Omega(\rvx; t) := \min_{\rvy \in \Omega} \norm{\rvx - \rvy}_2^2$ as the distance function and $\gamma(t) := \frac{1}{\sigma^2(t)}$ where $\sigma(t)$ is the total amount of noise added at time $t$ of the diffusion process.
We demonstrate in \cref{fig:toy-vis:prior-manual} that our manually bridged model guides the generation differently from diffusion models. Similar to \cref{fig:toy-vis:prior-analytic}, this is resulted from an SDE with only the bridge. \cref{fig:toy-vis:manual-sde-inp} shows that a Manually Bridged Model implemented in our proposed mechanism \texttt{MBM-arch} (\cref{fig:implementation-diagrams:mbm})
produces comparable results to a diffusion bridge model.

\paragraph{Quantitative results} We report our quantitative results in \cref{fig:toy-metrics}. We report evidence lower bound (ELBO) and infraction rate to respectively measure distribution match and constraint satisfaction rate. We compare standard diffusion model with both types of bridged models (diffusion and manual bridges) as well as different mechanisms for incorporating bridges.

Although passing the bridge information to the model to condition on (as done in conditional diffusion models, labeled \texttt{C-arch} in the plot) reduces infraction rate, it fails to achieve zero infraction rate. With bridges incorporated, both diffusion bridges or manual bridges brings the infraction rate down to (almost) zero with both \texttt{DB-arch} and \texttt{MBM-arch} mechanism. However, the ELBO plot shows that the \texttt{DB-arch} models are much slower to converge. It takes the \texttt{DB-arch} models around 250k iterations to achieve their maximum validation ELBO while the other models achieve the maximum at around 20k-30k iterations and start over-fitting after.
It demonstrates that our proposed \texttt{MBM-arch} mechanism or incorporating bridges is even effective for implementing diffusion bridges of \citet{liu2023learning}.
This plot also verifies that while diffusion bridges have a more solid grounding with guaranteed convergence to an $\Omega$-constrained distribution, our proposed manual bridges can successfully achieve similar results. This makes manual bridges a much simpler and more widely applicable alternatives to diffusion bridges.

\begin{table*}[t] 
  \centering
  \begin{tabular}{lllll}
    \toprule
    \centering
    Method     & Collision ($\%$)  $\downarrow$  & Offroad ($\%$) $\downarrow$ & Infraction ($\%$) $\downarrow$ & r-ELBO  $\uparrow$  \\
    \midrule
    Standard diffusion~\cite{karras2022elucidating} & $17.40\pm 0.01 $  & $7.33\pm 0.18$ & $23.00\pm 0.14$ & $-0.94\pm 0.01$ \\
    Guided diffusion & $0.00\pm 0.00$ & $0.00\pm 0.00$ & $0.00\pm 0.00$ & $-1.53\pm 0.11$  \\
    \midrule
    Manual bridge with $\texttt{C-arch}$ & $19.47\pm 0.20$ & $6.87\pm 0.03$ & $24.52\pm 0.23$ & $-0.94\pm 0.00$ \\
    Manual bridge with $\texttt{DB-arch}$ & $0.24\pm 0.03$ & $0.00\pm 0.00$ & $0.25 \pm 0.03$ & $-1.20\pm 0.01$  \\
    Manual bridge with $\texttt{MBM-arch}$ (ours)    & $0.10\pm 0.00$ &  $0.00\pm 0.00$ & $0.10\pm 0.00$ & $-0.95\pm 0.00$ \\
    \bottomrule
  \end{tabular}
  \caption{Results for traffic scene generation.
  We compare our model $\texttt{MBM-arch}$ against its alternative architectures $\texttt{C-arch}$ and $\texttt{DB-arch}$, a standard diffusion model, and a guided variant using the bridge term as a guidance signal.
  }
  \label{tab:ic-experiment}
\end{table*}

\subsection{Traffic Scene Generation}\label{sec:exp:ic}
We continue to the experiment of generating realistic traffic scenes for placing various number of arbitrary-sized vehicles on different maps. Since a major part of traffic simulation tasks focuses on learning driving behaviors and predicting vehicle trajectories, the task of generating a realistic initial traffic scene before the traffic simulations is equally significant. A common practice to determine whether a generated traffic scene sample is valid, is to check if any vehicle is outside the drivable region (called ``offroad''), or if vehicles overlap one another (called ``collision''). To deal with the invalid samples being generated, the previous methods for this problem usually simply discard them~\cite{tan2021scenegen, zwartsenberg2022conditional}. This requires repeated sampling from the model and is computationally inefficient. A more recent method updates the model post-hoc by guiding it via a separately trained model to lower the infraction rate~\cite{naderiparizi2024dont}. While it lowers the infraction rate, it is still far from the ideal of zero infraction and requires extra computing resources. \method{} is constructed to achieve (almost) zero infraction rate, hence eliminating the requirement of repeated sampling or post-hoc modifications.

The problem in this experiment is to generate up to 25 vehicles on a bird's-eye view image of a road from one of 70 locations in the dataset. To condition on the road, we extract its image features using a convolutional neural network (CNN) encoder \citep{lecun1998gradient} and pass them to the score network. The CNN is trained jointly with the score network. Each vehicle is represented by its center position, length, width, heading direction and velocity, which makes 7 dimensions per vehicle. Note that vehicles are considered jointly, and their interactions are modeled in full, so that the overall dimensionality of the problem is $N \times 7$, with $N$ the total number of vehicles. As alluded to earlier, there are two constraints in this experiment: collision and offroad. For the collision constraint we compute the area of intersection between two vehicles as the $\Omega_c$-distance function. For offroad, we consider a car that has all four wheels off the drivable area as offroad. Therefore, for the $\Omega_o$-distance, we compute the shortest squared distance to the constraint set of each of the four corners of the vehicle and take the minimum. The $\gamma$ functions for collision and offroad are respectively $\gamma_c(t) = \frac{1}{10\sigma^2(t)}$ and $\gamma_o(t) = \frac{1}{100\sigma^2(t)}$. The complex constraints of this problem renders diffusion bridges \citep{liu2023learning} and most of the existing diffusion-based constrained generative modeling approaches \citep{lou2023reflected,fishman2023diffusion,liu2024mirror,fishman2024metropolis} inapplicable.

We implement a diffusion model based on EDM~\cite{karras2022elucidating}. Our model has a transformer-based architecture~\cite{vaswani2017attention} composed of self-attention and cross-attention layers modeling the interactions between vehicles and between vehicles and the road map.
For evaluation, we report collision, offroad, and overall infraction rate to measure constraint satisfaction and the validation loss which corresponds to a reweighted ELBO (r-ELBO) to evaluate the distribution match.
The standard diffusion serves as the baseline.
We also include a guided diffusion variant that guides this pre-trained diffusion with our manual bridges without further training.
We compare these with different mechanisms for incorporating manual bridges as described in~\cref{sec:method:implementation}.
\cref{tab:ic-experiment} reports our results for this experiment.
Adding the manual bridge as an offset to the model, $\texttt{DB-arch}$, strongly reduces infractions but it deteriorates model quality significantly as evidenced by the degraded r-ELBO. While guided diffusion achieves zero infraction, its significantly worse r-ELBO suggests a strong distribution shift.
$\texttt{MBM-arch}$ ultimately produces a model with close to zero infractions, and importantly recovers performance in resulting r-ELBO.
In summary, \method{} with our proposed \texttt{MBM-arch} mechanism is able to effectively nearly solve the problem of infraction free traffic scene generation.

\section{Discussion}
We introduced manually bridged models, a family of score functions corresponding to distributions that anneal between a known base distribution $\pi(\rvx_T)$ to an $\Omega$-constrained distribution at time $t=0$. Manual bridges can, for instance, be used to impose prior knowledge in the form of safety constraints. We showed
\begin{itemize}
    \item one can construct such manual bridges having access to a distance function to the constraint set,
    \item such bridges can be combined to get a multiply-constrained model,
    \item learning diffusion processes on top of manual bridges can fit observational data while maintaining safety constraints and overcoming potentially unwelcome biases imposed by suboptimal manual bridges,
    \item architecture matters when imposing bridges, particularly for effective learning. We propose a combination of (i) including the manual bridge in the score function in form of addition and (ii) conditioning the score network on the bridge. We empirically demonstrate that this combination is crucial in training of bridged models.
\end{itemize}

\paragraph{Limitations}
While \method{} is much more flexible and widely applicable to constrained generative modeling problems, it still relies on differentiable distance functions, which may not be straightforward to implement for certain problems.
Further, it is quite likely that there is a greater family of manual bridges that can be described and composed than we have established.  This includes both the bridge primitives and the methods of combining bridge functions.

The proper functioning of manual bridges still requires a fair amount of hyperparameter fiddling, particularly integration schedule, minimum noise level, and bridge scaling.  This is related to the gap between numerical and exact integration, however, it bears mentioning here as the asymptotic scaling of bridges and their gradients is extreme; sufficiently so to be fiddly.  The specific inspiration for the architecture innovation we introduced came from this low noise asymptotic scaling difficulty.  Even with our architectural innovation and manual bridge conditions, applying this method to new domains is not automatic.  In this sense it is not terribly different to prior specification in Bayesian models, except here the error mode looks more like getting garbage samples that are ``safe.''

\paragraph{Future Work}
There is a great deal of theory work to be done, particularly with respect to how manual bridges fit into the diffusion bridge formalism.  We would ideally like to  identify the mathematical conditions on manual bridges and their combinations that would give rise to formal guarantees of constraint satisfaction.  Thus far, it appears that we need a generalization of Ito's lemma that relaxes the primary twice differentiable condition to twice differentiable almost everywhere or even looser.  

While solving the constrained, conditional agent placement distribution problem is of value; it is obvious that our approach should and could be extended to trajectory space as well.  Static actor placements are, after all, merely short trajectories.  Our initial attempts at this were promising, but, work remains to be done.

\section*{Acknowledgments}
We acknowledge the support of the Natural Sciences and Engineering Research Council of Canada (NSERC), the Canada CIFAR AI Chairs Program, Inverted AI, MITACS, the Department of Energy through Lawrence Berkeley National Laboratory, and Google. This research was enabled in part by technical support and computational resources provided by the Digital Research Alliance of Canada Compute Canada (alliancecan.ca), the Advanced Research Computing at the University of British Columbia (arc.ubc.ca), and Amazon.

\bibliography{aaai25}
\clearpage
\onecolumn
\appendix

\section{Table of Notations}
\begin{table*}[h] 
\centering
\begin{tabular}{ll}
    \toprule
    \centering
    \textbf{Symbol}     & \textbf{Description}\\
    \midrule
    $q_0(\rvx_0)$ & Density of data distribution.\\
    $q_t(\rvx_t)$ & Marginal distribution density of forward process at time $t$.\\
    $q_t(\rvx_t | \rvx_0)$ & Conditional distribution density of forward process at time $t$ given a data point $\rvx_0$.\\
    $\nabla_x \log q(\rvx_t)$ & Score function of the marginal distribution of forward process at time $t$.\\
    $\theta$ & Parameters of a diffusion model (usually weights and biases of a neural network)\\
    $\score_\theta(\rvx_t; t)$ & Score function learned by the model.\\
    $p_\theta(\rvx_t)$ & Marginal distribution density of the reverse process defined by $\score_\theta(\rvx_t; t)$.\\
    $P_\theta(\rvx)$ & The probability distribution corresponding to the reverse process implied by $\score_\theta(\rvx_t; t)$, at $t=0$.\\
    $\Omega$ & Constraint set.\\
    $\distancefn^\Omega(\rvx_t; t)$ & A distance function to $\Omega$.\\
    $\gamma(t)$ & Scaling of distance function to define manual bridges.\\
    $\bridge(\rvx_t; t)$ & A manual bridge. It has the form of $\gamma(t) \nabla_x \distancefn^\Omega(\rvx_t; t)$.\\
    $\score_{\theta}^\Omega$ & Score function defined by a manually bridged model.\\
    $\score_{\theta, \texttt{C}}^\Omega$ & Score function defined by a \texttt{C-arch} with a manual bridge conditioning.\\
    $\score_{\theta, \texttt{DB}}^\Omega$ & Score function defined by a manually bridged model with \texttt{DB-arch} architecture.\\
    $\score_{\theta, \texttt{MBM}}^\Omega$ & Score function defined by a manually bridged model with \texttt{MBM-arch} architecture.\\
    $\gB^\Omega(\rvx_t; t)$ & A notation to generically refer to diffusion bridge updates.\\
    \bottomrule
\end{tabular}
\end{table*}

\section{Experimental Details}
\subsection{Checkerboard constraint experiment}
\paragraph{Architecture} Our model's architecture is a fully connected multi-layer perceptron (MLP) with 2 residual blocks similar to that of \citet{naderiparizi2024dont}. We use a hidden layer dimension of 256 and sinusoidal timestep dimension of 128.
\paragraph{Diffusion process} We use the EDM \citep{karras2022elucidating} framework with similar hyperparameters. However, since in manual bridges it is important to be accurate when $t$ is close to zero, we modify the distribution $p_\sigma$ of \citet{karras2022elucidating} to be a log-linear with $\sigma_{\text{min}} = 3 \times 10^{-5}$ and $\sigma_{\text{max}} = 80$.
\paragraph{Sampling} We use the second-order Heun solver of \citet{karras2022elucidating} with 100 sampling steps and $S_{\text{churn}} = 10$. We also change the schedule of $\sigma$ to a log-linear from $\sigma_{\text{max}}$ to $\sigma_{\text{min}}$, similar to the training distribution.
\paragraph{Training} We train our models with the Adam optimizer \citep{kingma2014adam} with a learning rate of $3 \times 10^{-4}$ and batch size of $1000$. Other hyperparameters used are the default values in PyTorch \citep{paszke2019pytorch}. All the models were trained on GeForce GTX 1080
Ti GPUs. The total estimated compute for this experiment is about 10 GPU hours.

\subsection{Traffic Scene Generation experiment}
\paragraph{Architecture} The architecture for training models is based on transformers, whose backbone is composed of an encoder, a stack of self-attention with relative positional encodings (RPEs)~\citep{shaw2018self, wu2021rethinking,harvey2022flexible} and cross-attention residual blocks and a decoder. We refer readers to~\citet{naderiparizi2024dont} for more details with no extra modification compared to this work. With the same setup, we train each model with a batch size of $64$ and Adam optimizer \citep{kingma2014adam} with a learning rate of $10^{-4}$. The architecture contain approximately $6.3$ million parameters. To embed bridges in $\texttt{C-arch}$ or $\texttt{MBM-arch}$, we process it with an additional two-layer MLP with activation function SiLU. We use Tesla V100 for training and validating models with around $200$ hrs GPU-hours in total.

\paragraph{Diffusion process} Similar to Toy experiment, we deploy EDM~\citep{karras2022elucidating} framework and set $\sigma_{\text{min}} = 2\times 10^{-4}$ with a log-linear distribution $p_\sigma$.
\paragraph{Sampling} We use the first-order Euler-Maruyama solver~\citep{song2020score} with 300 sampling steps, set $S_{\text{churn}} = 10$ and change $\sigma$ to the log-linear schedule as the Checkerboard constraint experiment.
\paragraph{Training} We first train Standard diffusion for $950$k iterations with batch size 64. \method{} with $\texttt{C-arch}$, $\texttt{DB-arch}$ and $\texttt{MBM-arch}$ respectively, are fine-tuned based on the Standard diffusion and trained them $300$k iterations longer.
\paragraph{Bridge computation} While incorporating bridge term $\bridge^\Omega(\rvx_t; t)$ into \method{}-style training process, an issue was observed: when the noise level $\sigma_t$ injected in training sample $\rvx_t$ is high, and both vehicle positions and sizes are diffused, abnormally-huge-sized vehicles are ejected from the map roads, resulting in both ``offroad'' and ``collision'' losses increasing significantly. The gradients do not provide informative guidance to the model but simply dominate the training over the score function, which destabilizes the training and is not ideal. To stabilize the training with bridges integrated, we normalized $\rvx_t$ by $\sqrt{\Var(\rvx_t)} = \sqrt{\Var(\rvx_0 + \sigma\epsilon)} = \sqrt{1 + \sigma_t^2}$ before passing it to $\ell^\Omega(\cdot)$. With chain rule applied, we have
\begin{align}
&\nabla_{\rvx_t} \ell^{\Omega}(\rvx_t) = \frac{\partial\ell^{\Omega}(\rvx_t')}{\partial \rvx_t'} \frac{\partial \rvx_t'}{\partial \rvx_t},\ \text{where}\ \rvx_t' = \frac{\rvx_t}{\sqrt{1 + \sigma_t^2}} \\
\Rightarrow &\nabla_{\rvx_t} \ell^{\Omega}(\rvx_t) = \frac{1}{\sqrt{1 + \sigma_t^2}}\frac{\partial\ell^{\Omega}(\rvx_t')}{\partial\rvx_t'}
\end{align}
which effectively alleviates the problem, as the loss function evaluates the normal-sized vehicles centered on the road map.

\begin{figure*}[tb]
    \centering
    \includegraphics{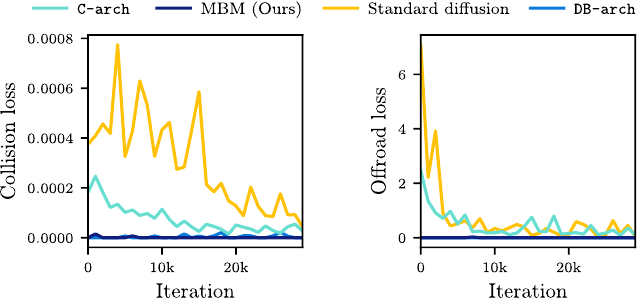}
    \caption{While standard diffusion models struggle to achieve zero infraction, incorporating diffusion bridges ensures constraint satisfaction.}
    \label{fig:app:additional:plots}
\end{figure*}

\section{Diffusion bridges}
\subsection{Sufficient conditions}\label{app:diffusion-bridges:sufficient}
In this section we rephrase the theory of diffusion bridges from \citet{wu2022diffusion} in our notation. Following the definitions in \cref{sec:method:manual-bridge}, let $\distancefn^\Omega(\rvx; t)$ be an $\Omega$-distance function and let $\gamma(t)$ be a weighting function. Define $\gB(\rvx; t) = - \gamma(t) \distancefn^\Omega(\rvx; t)$. The SDE in \cref{eq:reverse-bridged-process} is a bridge to $\Omega$ if the following holds
\begin{enumerate}
    \item $\distancefn^\Omega$ follows an (expected) Polyak-Lojasiewicz condition: $\meanp{t, \rvx_t \sim p_t^\Omega}{\distancefn^\Omega(\rvx_t; t)} \leq \meanp{t, \rvx_t \sim p_t^\Omega}{\norm{\nabla_x \distancefn^\Omega(\rvx_t; t)}^2}$ for all $t \in [0, T]$,
    \item Let
    \begin{itemize}
        \item $\beta(t) = \meanp{t, \rvx_t \sim p_t^\Omega}{\nabla_x \distancefn^\Omega(\rvx_t; t)^\transpose \nu(\rvx_t; t)}$,
        \item $\rho(t) = \E_{t, \rvx_t \sim p_t^\Omega} \Big[ \partial_t \distancefn^\Omega(\rvx_t; t) + \frac{1}{2} \Tr(\nabla^2_x \distancefn^\Omega(\rvx_t; t)g^2(t)) \Big]$,
        \item $\zeta(t) = \exp(\int_t^T \gamma(s) ds)$.
    \end{itemize}
    Then
    \begin{itemize}
        \item $\lim_{t \downarrow 0} \zeta(t) = +\infty,$
        \item $\lim_{t \downarrow 0} \frac{\zeta(t)}{\int_t^T \zeta(s) (\beta(s) + \rho(s))} = +\infty.$
    \end{itemize}
\end{enumerate}

Under the definition of our manual bridges, the only condition satisfied from above is $\lim_{t \downarrow 0} \zeta(t) = +\infty$. Therefore, manual bridges are not necessarily diffusion bridges.

\subsection{Diffusion bridges of \texorpdfstring{\citet{liu2023learning}}{}}
\citet{liu2023learning} provides a particular choice of diffusion bridges and shows they constitute a valid diffusion bridge. Here we re-state their proposed bridge expressions in our notation. Furthermore, we show their proposed bridge is equivalent to an optimal diffusion model for a data distribution of $\gU(\Omega)$, a uniform distribution on $\Omega$.

\paragraph{Diffusion bridges of \citet{liu2023learning}}
Consider a diffusion model as defined in \cref{sec:background:dm} with the forward process
\begin{equation}
    d\rvx_t = f(\rvx_t; t)\,dt + g(t)\,d\rvw,
\end{equation}
and conditional forward density 
\begin{equation}
    q_t(\rvx_t | \rvx_0) = \gN(\rvx_t; \alpha(t) \rvx_0, \sigma^2(t)\mI),
\end{equation}
where $\alpha(t)$ and $\sigma(t)$ are respectively the total scale and noise in the forward process.
As mentioned in \cref{sec:background:db}, diffusion bridges have the form of
\begin{equation}
    d \rvx_t = [\nu_\theta(\rvx_t; t) - g^2(t) \gB^\Omega(\rvx_t; t)] dt + g(t) d\bar{\rvw},
\end{equation}
where $\nu_\theta(\rvx_t; t) := f(\rvx_t; t) - g^2(t) \score_\theta(\rvx_t; t)$. \citet{liu2023learning} proposes the following form for the bridge update $\gB$:
\begin{equation}
    \gB^\Omega(\rvx_t; t) = \meanp{r^\Omega(\rvx_0 | \rvx_t)}{\nabla_{\rvx_t} \log r(\rvx_0 | \rvx_t)},
\end{equation}
where $r(\rvx_0|\rvx_t) := \gN(\rvx_0; \rvx_t / \alpha(t), \sigma^2(t)/ \alpha^2(t) \mI)$ and $r^\Omega(\rvx_0|\rvx_t) = \frac{r(\rvx_0 | \rvx_t) \1(\rvx_0 \in \Omega)}{Z(\rvx_t)}$ where $Z(\rvx_t)$ is the normalizing factor and is independent of $\rvx_0$.

\paragraph{Connection to a diffusion model for $\gU(\Omega)$}
\begin{lemma}\label{lemma:app:r-q-same-score-fn}
    For any $\rvx_0$ and $\rvx_t$ in $\R^d$ the following holds
    \begin{enumerate}
        \item $\nabla_{\rvx_t} \log r(\rvx_0 | \rvx_t) = \nabla_{\rvx_t} \log q(\rvx_t | \rvx_0)$,
        \item $\nabla_{\rvx_0} \log r(\rvx_0 | \rvx_t) = \nabla_{\rvx_0} \log q(\rvx_t | \rvx_0)$.
    \end{enumerate}
\end{lemma}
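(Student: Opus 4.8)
The plan is to exploit the fact that both $q_t(\rvx_t \mid \rvx_0)$ and $r(\rvx_0 \mid \rvx_t)$ are Gaussians whose exponents are the \emph{same} quadratic form up to a factor depending only on $t$. Writing out the two log-densities and collecting additive terms that are constant in both $\rvx_0$ and $\rvx_t$, I have
\begin{align}
    \log q_t(\rvx_t \mid \rvx_0) &= -\frac{1}{2\sigma^2(t)} \norm{\rvx_t - \alpha(t)\rvx_0}^2 + c_q(t), \\
    \log r(\rvx_0 \mid \rvx_t) &= -\frac{\alpha^2(t)}{2\sigma^2(t)} \norm{\rvx_0 - \rvx_t/\alpha(t)}^2 + c_r(t),
\end{align}
where $c_q(t)$ and $c_r(t)$ are the respective Gaussian normalizers and hence depend on $t$ alone.

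First I would apply the elementary identity $\norm{\rvx_0 - \rvx_t/\alpha(t)}^2 = \alpha^{-2}(t)\norm{\alpha(t)\rvx_0 - \rvx_t}^2$, obtained by pulling the scalar $1/\alpha(t)$ out of the norm. This cancels the $\alpha^2(t)$ prefactor in the second line and shows the two quadratic forms coincide:
\begin{equation}
    \frac{\alpha^2(t)}{2\sigma^2(t)} \norm{\rvx_0 - \rvx_t/\alpha(t)}^2 = \frac{1}{2\sigma^2(t)}\norm{\rvx_t - \alpha(t)\rvx_0}^2.
\end{equation}
Consequently $\log r(\rvx_0 \mid \rvx_t) - \log q_t(\rvx_t \mid \rvx_0) = c_r(t) - c_q(t)$ is a function of $t$ only, independent of both arguments.

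Taking $\nabla_{\rvx_t}$ of this difference annihilates the $t$-only term and yields claim (1), while taking $\nabla_{\rvx_0}$ likewise yields claim (2); in each case the two gradients agree. As a direct sanity check I would compute both sides by hand, e.g. $\nabla_{\rvx_t}\log q_t(\rvx_t\mid\rvx_0) = -\frac{1}{\sigma^2(t)}(\rvx_t - \alpha(t)\rvx_0)$, while the chain rule (the Jacobian of $\rvx_0 - \rvx_t/\alpha(t)$ in $\rvx_t$ being $-\mI/\alpha(t)$) gives $\nabla_{\rvx_t}\log r(\rvx_0\mid\rvx_t) = \frac{\alpha(t)}{\sigma^2(t)}(\rvx_0 - \rvx_t/\alpha(t)) = -\frac{1}{\sigma^2(t)}(\rvx_t - \alpha(t)\rvx_0)$, and these match. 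I do not expect a genuine obstacle: the only point requiring care is confirming that the normalizers $c_q(t), c_r(t)$ carry no dependence on $\rvx_0$ or $\rvx_t$ (they do not, being complete Gaussian normalizing constants), so that differentiating them away is legitimate.
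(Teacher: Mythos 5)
Your proof is correct and rests on the same computation as the paper's: both reduce to the observation that the exponents of $q_t(\rvx_t\mid\rvx_0)$ and $r(\rvx_0\mid\rvx_t)$ are the identical quadratic form $\tfrac{1}{2\sigma^2(t)}\norm{\rvx_t-\alpha(t)\rvx_0}^2$, and your explicit gradient check reproduces the paper's derivation of $\tfrac{\alpha(t)\rvx_0-\rvx_t}{\sigma^2(t)}$ exactly. The only (cosmetic) difference is that you first establish that the two log-densities differ by a function of $t$ alone, which delivers both claims in one stroke, whereas the paper differentiates each density separately and dismisses the second claim as ``similar.''
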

\begin{proof}
    We first derive $\nabla_{\rvx_t} \log r(\rvx_0 | \rvx_t)$:
    \begin{align}
        &r(\rvx_0 | \rvx_t) = \gN(\rvx_0; \rvx_t / \alpha(t), \sigma^2 / \alpha^2(t) \mI)\nonumber\\
        \Rightarrow &\nabla_{\rvx_t} \log r(\rvx_0 | \rvx_t) = \nabla_{\rvx_t} \frac{-(\rvx_0 - \frac{\rvx_t}{\alpha(t)})^\transpose(\rvx_0 - \frac{\rvx_t}{\alpha(t)})}{2\sigma^2(t) / \alpha^2(t)}
        = \frac{\rvx_0 - \frac{\rvx_t}{\alpha(t)}}{\sigma^2(t) / \alpha(t)} = \frac{\alpha(t) \rvx_0 - \rvx_t}{\sigma^2(t)}.
    \end{align}
    Then we derive $\nabla_{\rvx_t} \log q(\rvx_t | \rvx_0)$:
    \begin{align}
        &q(\rvx_t | \rvx_0) = \gN(\rvx_t; \alpha(t)\rvx_0, \sigma^2\mI)\nonumber\\
        \Rightarrow &\nabla_{\rvx_t} \log q(\rvx_t | \rvx_0) = \nabla_{\rvx_t} \frac{-(\rvx_t - \alpha(t)\rvx_0)^\transpose(\rvx_t - \alpha(t)\rvx_0)}{2\sigma^2(t)}
        = \frac{\alpha(t) \rvx_0 - \rvx_t}{\sigma^2(t)}.
    \end{align}
    Therefore, $\nabla_{\rvx_t} \log r(\rvx_0 | \rvx_t) = \nabla_{\rvx_t} \log q(\rvx_t | \rvx_0)$. Proof of the second part is similar.
\end{proof}
\begin{proposition} The bridge update of \citet{liu2023learning} is equivalent to an optimal diffusion model for $\gU(\Omega)$.
\end{proposition}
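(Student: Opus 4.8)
The plan is to verify directly that the bridge update $\gB^\Omega(\rvx_t;t)$ equals the Bayes-optimal score of a diffusion model whose data distribution is $\gU(\Omega)$. Recall that an optimal diffusion model is the minimizer of \cref{eq:objective-dm}, which is the exact marginal score $\nabla_{\rvx_t}\log q_t^{\gU}(\rvx_t)$, where $q_t^{\gU}(\rvx_t) := \frac{1}{|\Omega|}\int_\Omega q(\rvx_t \mid \rvx_0)\,d\rvx_0$ is the forward process run on uniform data (here $|\Omega|$ denotes the Lebesgue measure of $\Omega$, assumed finite and nonzero so that $\gU(\Omega)$ is well defined). Thus it suffices to show $\gB^\Omega(\rvx_t;t) = \nabla_{\rvx_t}\log q_t^{\gU}(\rvx_t)$.

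First I would use \cref{lemma:app:r-q-same-score-fn}(1) to replace the integrand in the bridge, obtaining $\gB^\Omega(\rvx_t;t) = \meanp{r^\Omega(\rvx_0\mid\rvx_t)}{\nabla_{\rvx_t}\log q(\rvx_t\mid\rvx_0)}$. Next comes the step carrying the actual content: I claim the weighting distribution $r^\Omega(\rvx_0\mid\rvx_t)$ is exactly the denoising posterior of the $\gU(\Omega)$ model. Viewed as functions of $\rvx_0$ with $\rvx_t$ held fixed, both $r(\rvx_0\mid\rvx_t) = \gN(\rvx_0; \rvx_t/\alpha(t), \sigma^2(t)/\alpha^2(t)\,\mI)$ and $q(\rvx_t\mid\rvx_0) = \gN(\rvx_t; \alpha(t)\rvx_0, \sigma^2(t)\mI)$ carry an identical quadratic exponent and differ only by the $\rvx_0$-independent prefactor $\alpha(t)^d$. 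Restricting to $\Omega$ and renormalizing (this prefactor, the normalizer $Z(\rvx_t)$, and the $1/|\Omega|$ all cancel) therefore yields $r^\Omega(\rvx_0\mid\rvx_t) = q(\rvx_t\mid\rvx_0)\1(\rvx_0\in\Omega)\big/\!\int_\Omega q(\rvx_t\mid\rvx_0')\,d\rvx_0'$.

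Finally I would invoke the standard marginal-score identity. Differentiating $q_t^{\gU}$ under the integral sign and using $\nabla_{\rvx_t}q(\rvx_t\mid\rvx_0) = q(\rvx_t\mid\rvx_0)\nabla_{\rvx_t}\log q(\rvx_t\mid\rvx_0)$ gives
\[
\nabla_{\rvx_t}\log q_t^{\gU}(\rvx_t) = \frac{\int_\Omega q(\rvx_t\mid\rvx_0)\,\nabla_{\rvx_t}\log q(\rvx_t\mid\rvx_0)\,d\rvx_0}{\int_\Omega q(\rvx_t\mid\rvx_0)\,d\rvx_0},
\]
which is precisely the posterior expectation obtained in the previous step, closing the chain $\gB^\Omega = \nabla_{\rvx_t}\log q_t^{\gU}$.

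I expect the main obstacle to be the middle step, establishing that $r^\Omega$ coincides with the $\gU(\Omega)$ denoising posterior, since the other two steps are respectively a direct application of \cref{lemma:app:r-q-same-score-fn} and a textbook identity. The subtlety there is purely in tracking that every $\rvx_0$-independent factor ($\alpha(t)^d$, $Z(\rvx_t)$, and $1/|\Omega|$) cancels, together with the mild regularity (finite gradients from \cref{def:omega-distance-fn}, Gaussian tails, and finite $|\Omega|$) needed to justify differentiating under the integral sign.
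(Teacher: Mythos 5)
Your proposal is correct and follows the same overall route as the paper: both arguments reduce the claim to showing that $r^\Omega(\rvx_0\mid\rvx_t)$ is the denoising posterior of a diffusion model with data distribution $\gU(\Omega)$, and then conclude via \cref{lemma:app:r-q-same-score-fn}(1) together with the Vincent-style identity $\nabla_{\rvx_t}\log q_t(\rvx_t)=\meanp{q(\rvx_0\mid\rvx_t)}{\nabla_{\rvx_t}\log q(\rvx_t\mid\rvx_0)}$ (which you re-derive by differentiating under the integral rather than citing). The one place you genuinely diverge is the middle identification: the paper shows $\nabla_{\rvx_0}\log q(\rvx_0\mid\rvx_t)=\nabla_{\rvx_0}\log r^\Omega(\rvx_0\mid\rvx_t)$ using \cref{lemma:app:r-q-same-score-fn}(2) and then asserts that equal score functions force equal densities, whereas you compare the two Gaussian densities directly, observing that $r(\rvx_0\mid\rvx_t)=\alpha(t)^d\,q(\rvx_t\mid\rvx_0)$ so that every $\rvx_0$-independent factor cancels upon restricting to $\Omega$ and renormalizing. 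Your version is more elementary and slightly tighter: the paper's ``same score hence equal'' step only pins the density down up to a multiplicative constant on each connected component of $\Omega$, a gap that matters in principle when $\Omega$ is disconnected (as in the checkerboard example) and that your explicit cancellation of $\alpha(t)^d$, $Z(\rvx_t)$, and $1/|\Omega|$ avoids entirely; what the paper's route buys in exchange is brevity and reuse of both halves of the lemma.
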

\begin{proof}
    Consider a diffusion process with the data distribution $q(\rvx_0) = \gU(\rvx_0; \Omega)$. Let's first look at the score function of $q(\rvx_0 | \rvx_t)$:
    \begin{align*}
        \nabla_{\rvx_0} \log q(\rvx_0 | \rvx_t) &= \nabla_{\rvx_0} \log q(\rvx_t | \rvx_0) + \nabla_{\rvx_0} \log q(\rvx_0)\\
        &\hspace{-1.1em}\overset{\text{\cref{lemma:app:r-q-same-score-fn}}}{=} \nabla_{\rvx_0} \log r(\rvx_0 | \rvx_t) + \nabla_{\rvx_0} \log q(\rvx_0)\\
        &= \nabla_{\rvx_0} \log r(\rvx_0 | \rvx_t) + \nabla_{\rvx_0} \log \1(\rvx_0 \in \Omega)\\
        &= \nabla_{\rvx_0} \log \left(r(\rvx_0 | \rvx_t) \1(\rvx_0 \in \Omega)\right)\\
        &= \nabla_{\rvx_0} \log \left(r^\Omega(\rvx_0 | \rvx_t) Z(\rvx_t)\right)\\
        &= \nabla_{\rvx_0} \log r^\Omega(\rvx_0 | \rvx_t)
    \end{align*}
    Therefore, $q(\rvx_0 | \rvx_t)$ and $r^\Omega(\rvx_0 | \rvx_t)$ have the same score function, hence they are equal. Furthermore, from \cref{lemma:app:r-q-same-score-fn}, we know $\nabla_{\rvx_0} \log r(\rvx_0 | \rvx_t) = \nabla_{\rvx_0} \log q(\rvx_t | \rvx_0)$. Consequently,
    \begin{align}
        \gB^\Omega(\rvx_t; t) &= \meanp{r^\Omega(\rvx_0 | \rvx_t)}{\nabla_{\rvx_t} \log r(\rvx_0 | \rvx_t)}
        = \meanp{q(\rvx_0 | \rvx_t)}{\nabla_{\rvx_t} \log q(\rvx_t | \rvx_0)}.
    \end{align}
    From \citet{vincent2011connection} we know
    \begin{equation}
        \nabla_{\rvx_t} \log q(\rvx_t) = \meanp{q(\rvx_0 | \rvx_t)}{\nabla_{\rvx_t} q(\rvx_t | \rvx_0)}.
    \end{equation}
    Therefore, $\gB^\Omega(\rvx_t; t)$ proposed by \citet{liu2023learning} is equivalent to the score functions of an optimal diffusion model with the data distribution $q(\rvx_0)$ equal to a uniform distribution on the constraint set $\Omega$.
\end{proof}

\section{Proofs}
We mentioned in the main text in \cref{sec:method:manual-bridge} that manually bridged models define score functions of distributions $p^\Omega(\rvx; t)$ that anneal to $p^\Omega(\rvx; 0) \propto p(\rvx; 0) \1_\Omega(\rvx)$. Here we formally prove this.

As stated before, the manually bridged models correspond to distributions of the from
\begin{align}
    p^\Omega(\rvx; t) = \frac{p(\rvx; t) \exp(-\gamma(t) \distancefn^\Omega(\rvx; t))}{Z(t)},
\end{align}
where $Z(t) = \int p(\rvx) \exp(-\gamma(t) \distancefn^\Omega(\rvx; t)) d\rvx$ is the normalizing constant. Since the integrand is non-negative,
\begin{equation*}
    Z(t) \geq \int_\Omega p(\rvx; t) \exp(-\gamma(t) \distancefn^\Omega(\rvx; t)) d\rvx = \int_\Omega p(\rvx; t) d\rvx,
\end{equation*}
where the last equality is due to $\distancefn^\Omega$ being zero on $\Omega$. Define $\alpha := \int_\Omega p(\rvx; t) d\rvx > 0$. Therefore, $Z(t)$ is bounded below by the constant $\alpha$. Hence,
\begin{align}
    p^\Omega(\rvx; t) \leq \frac{1}{\alpha} p(\rvx; t) \exp(-\gamma(t) \distancefn^\Omega(\rvx; t)).
    \label{eq:app:density-bound}
\end{align}

\begin{lemma}
    \label{lemma:app:converge-zero}
    For any arbitrary $\rvx \notin \Omega$, $\lim_{t \downarrow 0} p(\rvx; t) \exp(-\gamma(t)\distancefn^\Omega(\rvx)) = 0$.
\end{lemma}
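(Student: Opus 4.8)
The plan is to factor the product into a part that stays bounded and a part that vanishes, and then appeal to the squeeze theorem. First I would fix an arbitrary $\rvx \notin \Omega$ and treat $t$ as the only remaining variable, so that the claim reduces to controlling $p(\rvx; t)$ and $\exp(-\gamma(t)\distancefn^\Omega(\rvx; t))$ separately as $t \downarrow 0$.

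For the exponential factor, the key observation is that the distance function stays bounded away from zero near $t=0$. Since $\rvx \notin \Omega$, \cref{def:omega-distance-fn} forces $\distancefn^\Omega(\rvx; 0) > 0$ (the ``if and only if'' rules out the value $0$, and the function is $\R^{\geq 0}$-valued). Because $\distancefn^\Omega$ is continuous in $t$, I would choose $\delta > 0$ and $c > 0$ with $\distancefn^\Omega(\rvx; t) \geq c$ for all $t \in [0, \delta]$. Then, using $\lim_{t \downarrow 0}\gamma(t) = \infty$ from \cref{def:manually-bridged-model}, I get $\gamma(t)\distancefn^\Omega(\rvx; t) \geq c\,\gamma(t) \to \infty$, so
\begin{equation*}
    0 \leq \exp\bigl(-\gamma(t)\distancefn^\Omega(\rvx; t)\bigr) \leq \exp\bigl(-c\,\gamma(t)\bigr) \xrightarrow[t \downarrow 0]{} 0.
\end{equation*}

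For the density factor I would invoke the hypotheses of \cref{prop:mbm-sequence-of-distributions}: $p(\rvx; t)$ is finite for $\rvx \notin \Omega$, and the score $\score_\theta(\rvx, t)$ is continuous in $t$. Together these guarantee that $t \mapsto p(\rvx; t)$ does not diverge as $t \downarrow 0$, so it is bounded by some finite $M$ on a neighborhood $[0, \delta']$ of $0$. Combining the two bounds over $[0, \min(\delta, \delta')]$ yields
\begin{equation*}
    0 \leq p(\rvx; t)\exp\bigl(-\gamma(t)\distancefn^\Omega(\rvx; t)\bigr) \leq M\exp\bigl(-c\,\gamma(t)\bigr) \xrightarrow[t \downarrow 0]{} 0,
\end{equation*}
and the squeeze theorem closes the argument.

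The main obstacle is precisely the boundedness of $p(\rvx; t)$ near $t = 0$: the finiteness assumption alone only controls $p$ at each individual $t$, so I must lean on the continuity-in-$t$ hypothesis to rule out a density that blows up along the limiting sequence. If one wished to avoid extracting a uniform-in-$t$ bound, an alternative is to note that $\exp(-c\,\gamma(t))$ decays super-polynomially in $\gamma(t)$ and would dominate any at-most-polynomial growth of $p(\rvx; t)$; but the cleanest route is to use the stated finiteness and continuity to obtain the local bound $M$ and then apply the squeeze estimate above.
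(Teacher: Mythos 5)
Your proof is correct and follows essentially the same route as the paper's: bound $p(\rvx;t)$ by a finite $M$, use continuity of $\distancefn^\Omega$ in $t$ and $\distancefn^\Omega(\rvx;0)>0$ to get a lower bound $c>0$ near $t=0$, and let $\gamma(t)\to\infty$ kill the exponential (the paper writes this as an explicit $\epsilon$--$\delta$ argument rather than a squeeze estimate). Your remark on needing more than pointwise finiteness to extract the bound $M$ is a fair point of care the paper glosses over, but it does not change the argument.
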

\begin{proof}
    Consider any arbitrary $\rvx \notin \Omega$ and $\epsilon >0$. From \cref{prop:mbm-sequence-of-distributions}, $p$ is finite. Denote an upper bound of $p$ by $M$ i.e., $p(\rvx, t) \leq M$ for any $t$.
     Furthermore, since $\rvx \notin \Omega$ and $\distancefn$ is an $\Omega$-distance function, $\distancefn^\Omega(\rvx; 0) = C$ for some $C > 0$. Since $\distancefn^\Omega$ is continuous in $t$, $\lim_{t \downarrow 0} \distancefn(\rvx; t) = C$. Consider an arbitrary $0 < c < C$,
     \begin{equation}
         (\exists \delta_1 > 0) \; (t < \delta_1 \Rightarrow \distancefn^\Omega(\rvx; t) > c) \label{app:proofs:convergence:distancefn-lim}
     \end{equation}
    Also from \cref{def:omega-distance-fn}, $\lim_{t \downarrow 0} \gamma(t) = \infty$. Therefore, for a given $\epsilon$,
    \begin{equation}
        (\exists \delta > 0) \; t < \delta_2 \Rightarrow \gamma(t) > -\frac{\log (\epsilon / M) }{c}
    \end{equation}
    
    Let $\tau := \min\{\delta_1, \delta_2\}$. For all $t < \tau$ we have
    \begin{align}
        &\gamma(t) > - \frac{\log(\epsilon / M)}{c}
        \Rightarrow -\gamma(t) c < \log(\epsilon/M)\nonumber\\
        \Rightarrow & \exp(-\gamma(t)c) < \frac{\epsilon}{M}
    \end{align}
    Remember that $\gamma$ is non-negative and $c < \distancefn^\Omega(\rvx; t)$ (from \cref{app:proofs:convergence:distancefn-lim}). Hence,
    \begin{equation}
        \exp(-\gamma(t) \distancefn^\Omega(\rvx; t)) < \exp(-\gamma(t)c) < \frac{\epsilon}{M}
    \end{equation}
    Here, if $p(\rvx, t)$ is equal to zero, $p(\rvx; t) \exp(-\gamma(t)\distancefn^\Omega(\rvx; t))$ is also equal to zero. Otherwise,
    \begin{align}
        \exp(-\gamma(t)\distancefn^\Omega(\rvx; t)) < \frac{\epsilon}{M} \leq \frac{\epsilon}{p(\rvx, t)} \Rightarrow p(\rvx; t) \exp(-\gamma(t)\distancefn^\Omega(\rvx; t)) < \epsilon.
    \end{align}
    Therefore, in both cases $p(\rvx; t) \exp(-\gamma(t)\distancefn^\Omega(\rvx; t)) < \epsilon$ which concludes the proof.
\end{proof}
Following \cref{lemma:app:converge-zero} and using \cref{eq:app:density-bound}, for all $\rvx \notin \Omega$, we have $\lim_{t \downarrow 0} p^\Omega(\rvx; t) = 0$. Following the continuity of $p^\Omega(\rvx; t)$ in $t$, we conclude 
\begin{equation}
    \forall \rvx \notin \Omega, \; p^\Omega(\rvx; 0) = 0.
\end{equation}
Further, since for any $\rvx \in \Omega$, by definition $\distancefn^\Omega(\rvx; 0) = 0$,
\begin{equation}
    \forall \rvx \in \Omega, \; p^\Omega(\rvx; 0) = p(\rvx; 0).
\end{equation}
Therefore, $p(\rvx; 0)$ is an $\Omega$-constrained distribution, meaning it will only place mass on $\Omega$.

\section{Image watermarking experiment}
Here we present our image watermarking experiment. We follow the experimental setup of \citep{liu2024mirror} where an image is considered ``watermarked'' if it lies within an orthonormal polytope $\Omega := \{\rvx \in \R^d: c_i < a_i^\transpose \rvx < b_i, \forall i\}$. Here, $a_i \in \R^d$ and $b_i$ and $c_i$ are scalars. The parameters $\{a_i, b_i, c_i\}_{i=1}^m$ are private tokens only visible to users.
Our implementation is based on the codebase of \citet{liu2024mirror}\footnote{\url{https://github.com/ghliu/mdm/}} with some modifications explained below.

\paragraph{Watermark parameters} We use 100 random orthonormal vectors $a_i$ that were generated and provided in the codebase of \citet{liu2024mirror}. The other parameters are $b_i = -0.9$ and $c_i = 0.9$.

\paragraph{Dataset} We use a watermarked version of the AFHQv2 dataset~\citep{choi2020stargan} for this experiment. Remember that our problem setup requires all the training data to fall within the constraint set $\Omega$. This is also the case for the Mirror Diffusion Models (MDM) \citet{liu2023learning}. Given a set of watermark parameters, we first watermark all the images in the training dataset. We then train our models and baselines on this watermarked dataset.
Following \citet{liu2024mirror}, we watermark an image by going through the $a_i$ parameters (orthonormal vectors) of the watermark and if the condition of $b_i < a_i^\transpose \rvx < c_i$ is not satisfied, we shift the features $\rvx$ by a random amount along the vector $a_i$ such that the constraint is satisfied. More formally, $\rvx' \leftarrow \rvx + \delta a_i$ where $\delta$ is sampled from a uniform distribution with bounds derived according to $a_i^\transpose\rvx$ and the bounds $b_i$ and $c_i$. As $a_i$s are orthogonal to each other, this modification would not violate the other constraints.

\paragraph{Evaluation metrics}
We report the following four metrics in this experiment:
\begin{enumerate}
    \item Infraction rate: The percentage of images generated by the model that violate the watermark constraints.
    \item Infraction loss: The average value of the distance function $\distancefn^\Omega$ of the images generated from the model.
    \item FID: The Fréchet Inception Distance~\citep{heusel2017gans} between the generated images and the watermarked dataset. Note that the watermarked dataset is not the original AFHQv2 dataset but the watermarked version of it. This metric measures the perceptual similarity of the images generated from the model and the ones it was trained on.
    \item r-ELBO: The reweighted ELBO of the dataset assigned by model. This is effectively the training loss of the model.
\end{enumerate}
The first three metrics (infraction rate, infraction loss and FID) are computed on $N$ samples generated by the model where $N=15,804$ is the size of the training dataset.

\paragraph{Baselines} We compare te manual bridges under the three architectures \texttt{C-arch}, \texttt{DB-arch} and \texttt{MBM-arch} with the following baselines:
\begin{itemize}
    \item EDM-pretrained: a pretrained diffusion model from \citet{karras2022elucidating}. This model was trained on the original, non-watermarked AFHVQv2 dataset.
    \item EDM-finetuned: the same model as above but fine-tuned on the watermarked dataset.
    \item MDM: the Mirror Diffusion Model (MDM) from \citet{liu2024mirror}. MDM is restricted to convex constraints and requires defining a bijective mapping between the constraint set $\Omega$ and Euclidean space $\R^d$. This mapping is called a mirror map and is denoted by $(\nabla\phi, \nabla \phi^*)$ where $\nabla \phi^*$ is the inverse of $\nabla \phi$. Given a dataset with data points $\rvx \in \Omega$, MDM first applies $\nabla \phi$ to get $\rvy = \nabla\phi(\rvx) \in \R^d$ and trains a standard diffusion on the resulting dataset. Once trained, the samples generated by the model are transformed back to the constrained space by applying $\nabla \phi^*$. As a result, samples from MDM are guaranteed to lie within the constraint set $\Omega$.
    \item MDM-proj: the same model as EDM-finetuned, but with a projection operator applied to the generated images to ensure they fall within the constraint set.
\end{itemize}

\paragraph{Manual bridge definition} Here we explain different ingredients of manual bridges implemented. The ingredients are the distance function $\distancefn^\Omega$, the time-dependent scaling $\gamma$ and the conditioning signal used in $\texttt{C-arch}$ and $\texttt{MBM-arch}$.
\begin{itemize}
    \item \underline{Distance function}: We define a distance function $\distancefn^\Omega(\rvx; t) = \frac{1}{2} \norm{\mathrm{Proj}(\rvx) - \rvx}^2$. However, to avoid computational cost of backpropagating through the projection operator, we modify it to $\distancefn^\Omega(\rvx; t) = \frac{1}{2} \norm{\overline{\mathrm{Proj}(\rvx)} - \rvx}^2$ where the bar on $\overline{\mathrm{Proj}(\rvx)}$ denotes the stop-grad operation. Note $\distancefn^\Omega$ remains a distance function with this modification.
    \item \underline{Scale}: Our choice of scale function is $\gamma(t) = \frac{1}{\sigma^2(t)}\frac{\sigma_{\mathrm{data}}^2}{\sigma^2(t) + \sigma_{\mathrm{data}}^2}$. This satisfies the required conditions of $\gamma(T) \approx 0$ and $\lim_{t \rightarrow 0} \gamma(t) = \infty$.
    \item \underline{Conditioning}: We apply a separate scaling to the signal we pass to the model to condition on (in our \texttt{MBM-arch} as well as \texttt{C-arch}) in order to avoid its scale becoming large for large noise levels. The conditional signal is equal to $-\gamma'(t) \nabla_{\rvx_t}(\distancefn^\Omega(\rvx_t; t))$ where $\gamma'(t) = \frac{\sigma_\mathrm{data}}{\sqrt{\sigma^2(t) + \sigma_\mathrm{data}^2}}$.
    This is because $\Var[{\rvx_t}] = \sigma^2(t) + \sigma_\mathrm{data}^2$ in the variance-exploding diffusion process of EDM. Furthermore, we add a channel to the conditioning signal filled with $\1_{x \neq 0}(b(\rvx_t; t))$ where $\1_{x \neq 0}$ is an element-wise binary function that is only equal to zero when its input is zero. This modification empirically boosted the performance of our models.
\end{itemize}

\paragraph{Training} all the models and baselines (apart from EDM-pretrained which does not involve further training), are trained for $200,000$ iterations with a batch size of $256$. We use Adam optimizer with a learning rate of $2 \times 10^{-4}$. All the models initialized from the weights of EDM-pretrained.

\paragraph{Sampling} We used EDM's sampler with Heun correction and the parameters of $S_{\mathrm{churn}}=10$, $\sigma_{\mathrm{max}} = 80$ and $\sigma_{\mathrm{max}} = 10^{-5}$.

\paragraph{Results} \cref{tab:watermark-afhqv2-experiment} reports ours results of this experiment. We observe that MDM and MDM-proj achieve zero infraction rate because they both include a transformation operator in the end that places the generated sample in the constraint set. Intuitively, MDM-proj shifts the distribution since all the constrained-violating samples (which is more than 91\% of them as the samples are generated by EDM-finetuned) are moved to the boundary, resulting in an overpopulation of samples at the boundary. MDM, on the other hand, requires the mirror maps that are not available for complex constraints. In particular, it requires convex constraints. While the mirror maps are available in this experiment, in a more general setting such as that of \cref{sec:exp:ic}. Nonetheless, MDM suffers from a poor performance in terms of FID, suggesting a distribution shift. Manual bridges with \texttt{MBM-arch} achieves almost-zero infraction rate with the best FID and r-ELBO comparable to the best of baselines.

\begin{table*}[t] 
  \centering
  \begin{tabular}{lllll}
    \toprule
    \centering
    Method     & Infraction ($\%$)  $\downarrow$  & Inf. loss  $\downarrow$  & FID $\downarrow$ & r-ELBO ($\times 10^2$)$\uparrow$\\
    \midrule
    EDM-pretrained~\cite{karras2022elucidating} & $91.44 \pm 0.25$ & $0.35$ & $5.24 \pm 0.06$ & \underline{$-15.95 \pm 0.09$}\\
    EDM-finetuned & $91.52 \pm 0.39$ & $0.36$ & $2.95 \pm 0.09$ & $\mathbf{-15.78 \pm 0.07}$\\
    \midrule
    MDM~\citep{liu2024mirror} & $0$ & $0$ & $3.73 \pm 0.10$ & --\\
    MDM-proj~\citep{liu2024mirror} & $0$ & $0$ & $\underline{2.90 \pm 0.09}$ & --\\
    \midrule
    Manual bridge with \texttt{C-arch} & $90.74 \pm 0.17$ & $0.31$ & \underline{$2.77 \pm 0.07$} & $\mathbf{-15.80 \pm 0.05}$\\
    Manual bridge with \texttt{DB-arch} & $0$ & $0$ & $\underline{2.81 \pm 0.05}$ & $-16.54 \pm 0.10$\\
    Manual bridge with \texttt{MBM-arch} (ours) & $0.02 \pm 0.00$ & $3 \times 10^{-12}$ & $\mathbf{2.65 \pm 0.07}$ & $\mathbf{-15.83 \pm 0.03}$\\
    \bottomrule
  \end{tabular}
  \caption{Results for the image watermarking experiment on the AFHQv2 dataset.}
  \label{tab:watermark-afhqv2-experiment}
\end{table*}

\section{Additional Traffic Scene Generation Results}
In this section we provide more results from the traffic scene generation experiment. \cref{fig:app:additional:plots} shows the collision and offroad loss over the course of training of different models. Note that these models are all trained \textit{from scratch} as opposed to the results in the main text that the models were fine-tuned. Also note that collision and offroad loss are different than collision and offroad rates reported in the main text. The metrics here measure ``how bad'' the infractions are. A collision loss of zero corresponds to a collision rate of zero, and similarly for offroad. The trend we observe here is similar to that of the main text.

In \cref{fig:app:additional:samples} we provide more samples from different models, similar to the top row of \cref{fig:banner}.

\begin{figure*}[!htp]
    \centering
    \begin{subfigure}[t]{0.9\textwidth}
        \includegraphics[width=\textwidth]{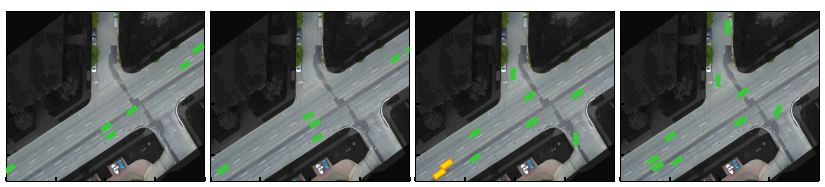}
    \end{subfigure}
    
    \begin{subfigure}[t]{0.9\textwidth}
        \includegraphics[width=\textwidth]{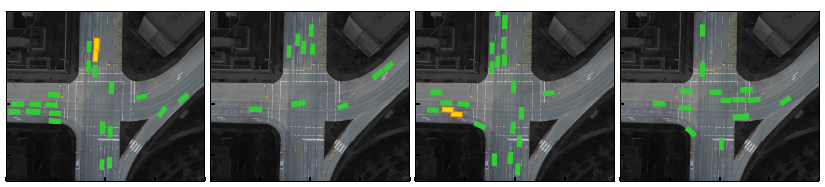}
    \end{subfigure}

    \caption{A few more visualizations from Traffic Scene Generation experiment. From left to right: Standard diffusion, Bridge model (Offset), Conditional diffusion and \method{}. As traffic scenes become crowded, the occurrences of infraction show up in the samples generated from standard diffusion and condition diffusion models. While no infraction happens in the samples generated from Bridge model (Offset), the vehicle orientation and distances between vehicles are not convincingly realistic. Samples from \method{} appears more realistic and natural.}
    \label{fig:app:additional:samples}
\end{figure*}

\end{document}